\newtheorem{theorem}{\textbf{Theorem}}
\newtheorem{lemma}{\textbf{Lemma}}
\let\originalleft\left
\let\originalright\right
\renewcommand{\left}{\mathopen{}\mathclose\bgroup\originalleft}
\renewcommand{\right}{\aftergroup\egroup\originalright}
\newcommand{\mmas}{$\lambda$-\text{MMAS}\xspace}
\newcommand{\pbil}{\text{\sc PBIL}\xspace} 
\newcommand{\eda}{\text{\sc EDA}\xspace} 
\newcommand{\edas}{\text{\sc EDAs}\xspace} 
\newcommand{\eas}{\text{\sc EAs}\xspace} 
\newcommand{\cga}{\text{\sc cGA}\xspace} 
\newcommand{\umda}{\text{\sc UMDA}\xspace} 
\newcommand{\om}{\text{\sc OneMax}\xspace} 
\newcommand{\bval}{\text{\sc BinVal}\xspace} 
\newcommand{\los}{\text{\sc LeadingOnes}\xspace} 
\newcommand{\oneoneEA}{$(1+1)~\text{EA}$\xspace}
\newcommand{\bin}[1]{\text{Bin}\left(#1\right)} 
\newcommand{\bigO}[1]{\mathcal{O}\left(#1\right)} 
\journal{arXiv.org}
\begin{document}

\begin{frontmatter}

\title{Runtime Analysis of the Univariate Marginal Distribution
Algorithm under Low Selective Pressure and Prior Noise\tnoteref{mytitlenote}}
\tnotetext[mytitlenote]{Preliminary version of this work will appear
in the Proceedings of the 2019 Genetic and Evolutionary 
Computation Conference (GECCO 2019), Prague, Czech Republic.}

\author{Per Kristian Lehre \& Phan Trung Hai Nguyen}
\address{School of Computer Science, University of Birmingham, Birmingham B15 2TT, U.K.}

\begin{abstract}
We perform a rigorous runtime analysis for the
Univariate Marginal Distribution Algorithm 
on the \los function, a well-known benchmark
function in the theory community of evolutionary computation
with a high correlation between decision variables.
For a problem instance of size $n$, the currently best known 
upper bound on the expected runtime
is $\bigO{n\lambda\log\lambda+n^2}$ 
(Dang and Lehre, GECCO 2015), while a lower bound necessary to
understand how the algorithm copes with variable dependencies is still missing. 
Motivated by this, we 
show that the algorithm requires a $e^{\Omega(\mu)}$ 
runtime with high probability and in expectation
if the selective pressure  is low; otherwise, we obtain a lower bound of $\Omega(\frac{n\lambda}{\log(\lambda-\mu)})$ 
on the expected runtime. Furthermore, 
we for the first time consider the algorithm on the
function under a prior noise model and 
obtain an $\bigO{n^2}$  
expected runtime for the optimal parameter settings.
In the end, our theoretical 
results are accompanied by empirical findings, not only matching
with rigorous analyses but also providing
new insights into the behaviour of the algorithm. 
\end{abstract}

\begin{keyword}
Univariate marginal distribution algorithm,
leadingones,
noisy optimisation,
running time analysis, 
theory
\end{keyword}

\end{frontmatter}

\section{Introduction}

Estimation of Distribution Algorithms (\edas)
\citep{Muhlenbein1996,Pelikan2002,larranaga2001estimation}
are black-box optimisation methods
that search for optimal solutions by building and sampling
from probabilistic models. They are known by various other names, including 
probabilistic model-building genetic algorithm or
iterated density estimation algorithms.
Unlike traditional evolutionary algorithms (\eas), which use standard
genetic operators such as mutation and crossover to create variations, \edas, on the other hand,
achieve it via model building and model sampling. 
The workflow of \edas is an iterative process.
The starting model is a uniform distribution over the search space, from which the initial population of $\lambda$ individuals is sampled. 
The fitness function then scores each individual, 
and the algorithm selects the $\mu<\lambda$ fittest individuals to update the model. 
The procedure is
repeated many times and terminates
when a threshold on the number of iterations is exceeded or
a solution of good quality is obtained \citep{Eiben:2003:IEC:954563, bib:Hauschild2011}.
We call the value $\lambda$ the offspring population size, while 
the value $\mu$ is known as the parent population size of the algorithms.

Several \edas have been proposed over the last
decades. They differ in how they learn the variable interactions and build/update the probabilistic models over iterations.
In general, \edas can be categorised
into two  classes: univariate and multivariate. 
Univariate \edas, which take advantage of first-order statistics 
(i.e. the means while assuming variable independence),
usually represent the 
model as a probability vector, and 
individuals are sampled independently and identically from a 
product distribution. 
Typical \edas in this class are the 
Univariate Marginal Distribution Algorithm (\umda \citep{Muhlenbein1996}), 
the compact Genetic Algorithm (\cga \citep{bib:Harik1997}) 
and the Population-Based Incremental Learning (\pbil \citep{bib:Baluja1994}). 
Some ant colony optimisation algorithms like the \mmas \citep{STUTZLE2000889}
can also be cast into this framework 
(also called $n$-\text{Bernoulli}-$\lambda$-$\eda$ 
 \citep{bib:Friedrich2016}). 
In contrast, multivariate \edas apply statistics of order two or more to capture the underlying structures of the addressed problems. 
This paper focuses on univariate \edas on discrete 
optimisation, and for that reason
we refer the interested readers to 
\citep{bib:Hauschild2011, larranaga2012review} 
for other \edas on a continuous domain.

In the theory community, researchers perform rigorous analyses
to gain insights into  the  runtime
(synonymously, optimisation time), which is defined as 
the number of function evaluations of the
algorithm until an optimal solution is found for the 
first time. In other words, theoretical work 
usually addresses the unlimited case
when we consider the run of the algorithm as an infinite process.
Considering function evaluations is motivated by the fact that these are
often the most expensive operations, whereas other operations can usually be
executed very quickly. Steady-state algorithms like the simple 
\oneoneEA have the number of function evaluations equal the number of iterations,
whereas for univariate \edas the former is larger by a factor of the offspring population size $\lambda$ than the latter. 
Runtime analyses give performance guarantee of the algorithms for a wide range of problem instance sizes. 
Due to the complex interplay of variables and limitations on 
the state-of-the-art tools in algorithmics, runtime analysis is often performed on 
simple (artificial) problems such as \om, \los and \bval, hoping that this 
provides valuable insights into the development of 
new techniques for analysing search heuristics and the behaviour of such algorithms 
on easy parts of more complex
problem spaces \citep{Doerr2016}. By 2015, there had been a handful of runtime results 
for \edas \citep{Dang:2015,bib:Friedrich2016}, since then this class of 
algorithms have constantly drawn more attention from the 
theory community as evidenced in the
increasing number of \eda-related publications recently \citep{bib:Friedrich2016,bib:Krejca, bib:Sudholt2016,bib:Wu2017,bib:Witt2017,CGA-GAUSSIAN-NOISE,bib:Lehre2017,bib:lehre2018a,
Lehre2018PBIL,Witt2018Domino,
bib:Lengler2018,DoerrSigEDA2018,Hasenohrl:2018}.

Droste \citep{bib:Droste2006} in 2006
performed the first rigorous 
analysis of the \cga,
which works on a population of two individuals and updates the
probabilistic model additively via 
a parameter $K$ (also referred to as the hypothetical
population size of a genetic algorithm 
that the \cga is supposed to model) and obtained 
a lower bound  $\Omega(K\sqrt{n})=\Omega(n^{1+\varepsilon})$ 
on the expected runtime for
the \cga on any pseudo-Boolean function 
for any small constant $\varepsilon>0$. 
Each component in the probabilistic model (also called marginal) 
of the \cga considered in \citep{bib:Droste2006} 
is allowed to reach the extreme values zero and one. Such an
algorithm is referred to as 
an \eda without margins, since in contrast it is 
possible to reinforce the margins $[1/n,1-1/n]$
(sometimes called borders) to keep it 
away from the extreme probabilities.
Friedrich \textit{et al.} \citep{bib:Friedrich2016}, on consideration of
univariate \edas (without borders),
conjectured that the \cga might not 
optimise the \los function efficiently (i.e., within an $\bigO{n^2}$
expected runtime) as the algorithm is 
balanced but not stable. They then proposed 
a so-called stable \cga to overcome this, which requires an 
$\bigO{n\log n}$ expected runtime on the same function. 
Motivated by the same work, 
Doerr \textit{et al.} \citep{DoerrSigEDA2018} 
recently developed the significant \cga, 
which uses memory to determine 
when the marginals should be set to a value in the set 
$\{1/n,1/2,1-1/n\}$, and   
surprisingly the algorithm 
optimises the \om and \los functions using an $\bigO{n\log n}$ 
expected runtime.

The \umda is probably the most famous univariate 
\eda. In each so-called iteration, the algorithm updates each marginal
to the corresponding frequency of 1s among the $\mu$ 
fittest individuals. In 2015,
Dang and Lehre \citep{Dang:2015} via the level-based theorem
\citep{bib:Corus2016} obtained an upper bound of
$\bigO{n\lambda\log \lambda + n^2}$ on the expected runtime 
for the algorithm (with margins) on the \los function when the offspring
population size is $\lambda=\Omega(\log n)$ and 
the selective pressure $\mu/\lambda\le  1/(1+\delta)e$
for any constant $\delta>0$.
For the optimal setting $\lambda=\bigO{n/\log n}$, 
the above bound becomes
$\bigO{n^2}$, which emphasises the need of borders for the algorithm 
to optimise the \los function efficiently 
compared to the findings in \citep{bib:Friedrich2016}.
We also note that a generalisation of the \umda 
is the \pbil  \citep{bib:Baluja1994}, which  
updates the marginals using a convex combination with a 
smoothing parameter $\eta\in [0,1]$ between
the current marginals and the frequencies of 1s among the 
$\mu$ fittest individuals
in the current population. 
Wu \textit{et al.} \citep{bib:Wu2017} performed the first 
rigorous runtime analysis of the algorithm, where
they argued that for a sufficiently large 
population size, the algorithm can
avoid making wrong decisions early even when 
the smoothing parameter is large. They 
also showed an upper bound $\bigO{n^{2+\epsilon}}$ on the 
expected runtime for the \pbil (with margins) on the \los
function for some small constant $\varepsilon>0$. 
The required offspring population size yet 
still remains large \citep{bib:Wu2017}. Very recently,
Lehre and Nguyen \citep{Lehre2018PBIL}, via the level-based theorem
 with some additional arguments, obtained an upper bound
 $\bigO{n\lambda\log \lambda+n^2}$ on the expected runtime 
for the offspring population 
sizes $\lambda=\Omega(\log n)$ and a sufficiently high selective pressure. 
This result improves the bound in \citep{bib:Wu2017} by a factor of $\Theta(n^\varepsilon)$ for the optimal parameter 
setting $\lambda=\bigO{n/\log n}$.

In this paper, we analyse the \umda in order to, when combining with previous results \citep{Dang:2015,bib:lehre2018a}, 
completes the picture on the runtime of the algorithm on the \los function, a widely used
benchmark function with a high correlation between variables. 
We first show that under a low selective pressure the algorithm fails to optimise the function in polynomial runtime with high probability and in expectation. 
This result essentially reveals the limitations of probabilistic models
based on probability vectors as the algorithm hardly stays
in promising states 
when the selective pressure is not high enough, while the global optimum
cannot be sampled with high probability.
On the other hand, when 
the selective pressure is sufficiently high, we obtain a lower bound of 
$\Omega(\frac{n\lambda}{\log(\lambda-\mu)})$ 
on the expected runtime for the offspring population sizes 
$\lambda=\Omega(\log n)$.
Moreover, we introduce noise to the \los function, 
where a uniformly chosen bit is flipped with (constant)
probability $p<1$ before evaluating the fitness 
(also called prior noise). Via the level-based theorem, 
we show that the expected runtime of the algorithm on the noisy function 
is still $\bigO{n^2}$ for an optimal population size $\lambda=\bigO{n/\log n}$. 
To the best of our knowledge, this is the first 
time that the \umda is rigorously studied in 
a noisy environment, while the \cga 
is already considered in \citep{CGA-GAUSSIAN-NOISE} under Gaussian posterior noise.
Despite the simplicity of the noise model, 
this can be viewed as the first step towards
understanding the behaviour of the algorithm in a noisy environment. 
In the end, we provide empirical 
results to support our theoretical analyses and give new
insights into the run of the algorithm which the theoretical results do not cover.
Moreover, many algorithms similar to the \umda with 
a fitness proportional selection are popular in
bioinformatics \citep{Armaaanzas2008}, where they 
relate to the notion of \textit{linkage equilibrium}
\citep{bib:Slatkin2008,bib:muhlenbein2002} 
-- a popular model assumption in 
population genetics. Therefore, studying the \umda especially 
in the presence of variable dependence and mild noise
solidifies our
understanding of population dynamics.

The paper is structured as follows.
Section~\ref{sec:preliminaries} introduces
the studied algorithm. Section~\ref{sec:umda-on-los-low} provides a 
detailed analysis for the algorithm on
the \los function in case of low selective pressure, 
followed by the analysis for a  high 
selective pressure in Section~\ref{sec:umda-los-high-sel-press}.
In Section~\ref{sec:umda-noise-los}, we 
introduce the \los function
with prior noise and show an upper bound $\bigO{n^2}$ on the expected 
runtime. Section~\ref{sec:experiments} presents an empirical study to 
complement theoretical results derived earlier. 
The paper ends in Section~\ref{sec:conclusion}, 
where we give our concluding remarks and speak of potential future work. 

\section{The algorithm}
\label{sec:preliminaries}

In this section we describe the studied algorithm. Let
$\mathcal{X}=\{0,1\}^n$ be 
a finite binary search space with $n$ dimensions, 
and each individual in $\mathcal{X}$ is represented as 
$x=(x_1,x_2,\ldots,x_n)$. 
The population of $\lambda$ individuals 
in iteration $t$ is denoted as 
$P_t:=(x_t^{(1)},\ldots,x_t^{(\lambda)})$. 
We consider the maximisation of  
an objective function
$f:\mathcal{X} \rightarrow \mathbb{R}$.

The \umda, defined in Algorithm~\ref{umda-algor},
maintains a probabilistic model that is 
represented as an $n$-vector 
$p_t:=(p_{t,1},\ldots,p_{t,n})$, and 
each marginal $p_{t,i}\in [0,1]$ 
for $i \in [n]$ (where $[n]:=[1,n]\cap \mathbb{N}$)  is
the probability of sampling a one at the $i$-th 
bit position in the offspring. The 
joint probability distribution of an individual $x\in \mathcal{X}$
given the current model $p_t$ is formally defined as
\begin{equation}\label{eq:product-distribution}
\Pr\left(x \mid p_t\right)
=\prod_{i=1}^{n}\left(p_{t,i}\right)^{x_i} \left(1-p_{t,i}\right)^{1-x_i}.
\end{equation}
The starting model is the uniform 
distribution $p_0:=(1/2,\ldots,1/2)$. In an iteration $t$, 
the algorithm samples
a population $P_t$ of $\lambda$ individuals, sorts them
in descending order according to fitness and then selects the $\mu$ fittest individuals to
update the model (also called the selected population).
Let $X_{t,i}$ denote the number of 1s sampled
at bit position $i\in [n]$ in the selected population.
The algorithm updates each marginal using  
$p_{t+1,i} = X_{t,i}/\mu$. 
Each marginal is also restricted to be within the interval $[1/n,1-1/n]$,
where the values $1/n$ and $1-1/n$ are called 
lower and upper border, respectively.
We call the ratio $\gamma^*:=\mu/\lambda$  
the selective pressure of the algorithm.

\begin{algorithm}
    \DontPrintSemicolon        
        $t\leftarrow 0$; initialise $p_t\leftarrow (1/2,1/2,\ldots,1/2)$\;
        \Repeat{termination condition is fulfilled}{
            \For{$j=1,2,\ldots,\lambda$}{
                sample $x_{t,i}^{(j)} \sim \text{Bernoulli}(p_{t,i})$ 
                for each $i\in [n]$ \;
            }
            sort $P_t \leftarrow (x_t^{(1)},x_t^{(2)},\ldots,x_t^{(\lambda)})$ such that
            $f(x^{(1)})\ge f(x^{(2)})\ge \ldots\ge f(x^{(\lambda)})$\;
            \For{$i=1,2,\ldots,n$}{
                $p_{t+1,i} \leftarrow \max\{1/n, \min\{1-1/n, X_{t,i}/\mu\}\}$\;
              }
            $t\leftarrow t+1$\;
        }
    \caption{\umda \label{umda-algor}}
\end{algorithm}

\section{Low Selective Pressure}
\label{sec:umda-on-los-low}

Recall that we consider the problem of maximising 
the number of leading 1s in a 
bitstring, which is  defined by 
$$
\los(x):=\sum_{i=1}^{n}\prod_{j=1}^{i}x_j.
$$ 
The bits in this particular function are highly correlated, 
so it is often used to study the ability of \eas 
to cope with variable dependency \citep{Krejca2018survey}. 
Previous studies \citep{Dang:2015,bib:lehre2018a} showed that the \umda 
 optimises the function within an $\bigO{n^2}$ expected time 
 for the optimal offspring  population size 
$\lambda=\bigO{n/\log n}$.

Before we get to analysing the function, we introduce 
some notation.
Let $C_{t,i}$ for all $i\in [n]$ 
denote the number of individuals
having at least $i$ leading 1s
in iteration $t$, and
$D_{t,i}$ is the number of individuals having 
$i-1$ leading 1s, 
followed by a 0 at the block $i$. For the special case of $i=1$,
$D_{t,i}$ consists of those with zero leading 1s.
Furthermore, let $(\mathcal{F}_t)_{t\in \mathbb{N}}$
be a filtration induced from the 
population $(P_t)_{t\in \mathbb{N}}$.

Once the population has been sampled, 
the algorithm invokes truncation selection to select the $\mu$ fittest individuals to update the probability vector. 
We take this $\mu$-cutoff into account
by defining a random variable 
$$
Z_t := \max\{i\in \mathbb{N} : C_{t,i}\ge \mu\},
$$ 
which tells us 
how many marginals, counting from position one, 
are set to the upper border $1-1/n$
in iteration $t$. Furthermore, we define another random variable
$$
Z_t^*:=\max\{i\in \mathbb{N} : C_{t,i}>0\}
$$
to be the number of leading 1s
of the fittest individual(s).
For readability, we often leave out the indices of random variables like when we write $C_{t}$ instead of $C_{t,i}$, 
if values of the indices are clear from the context. 

\subsection{On the distributions of $C_{t,i}$ and $D_{t,i}$}

In order to analyse the distributions of the random variables
$C_{t,i}$ and $ D_{t,i}$, 
we shall take an alternative view on the 
sampling process at an arbitrary 
bit position $i\in [n]$ in iteration $t\in \mathbb{N}$
via the \textit{principle of deferred decisions}
\citep{bib:Motwani1995}.
We imagine that the process 
samples the values of the first bit for $\lambda$
individuals. Once this has finished, it moves on to
the second bit and so on until the population is sampled. 
In the end, we will obtain
a population that is sorted in descending order according to fitness.

We now look at the first bit 
in iteration $t$. The number of 1s sampled 
in the first bit position 
follows a binomial distribution with 
parameters $\lambda$ and $p_{t,1}$, i.e.,
$C_{t,1}\sim \bin{\lambda,p_{t,1}}$.
Thus, the number of 0s at the first bit position is 
$D_{t,1}=\lambda-C_{t,1}$.

Having sampled the first bit 
for $\lambda$ individuals, and 
note that the bias due to selection
in the second bit position comes into play only if 
the first bit is 1. 
If this is the case, then a 1 is more preferred to a 0. 
The probability of sampling a 1 
is $p_{t,2}$; thus, the number of individuals having at least 
2 leading 1s is binomially distributed with 
parameters $C_{t,1}$ and $p_{t,2}$, that is, 
$C_{t,2}\sim \bin{C_{t,1},p_{t,2}}$, 
and the number of 
0s equals $D_{t,2}=C_{t,1}-C_{t,2}$. Unlike the first
bit position, there are still $D_{t,1}$
remaining individuals, since for these individuals 
the first bit is a 0, there is no bias
between a 1 and a 0. The number of 1s 
follows a binomial distribution with 
parameters $D_{t,1}$ (or $\lambda-C_{t,1}$) 
and $p_{t,2}$.

We can generalise this result for an arbitrary bit position 
$i\in [n]$. The number of individuals having at least 
$i$ leading 1s follows a binomial distribution with 
$C_{t,i-1}$ trials and success probability $p_{t,i}$,
i.e.,
$C_{t,i} \sim \bin{C_{t,i-1},p_{t,i}},$
and $D_{t,i}= C_{t,i-1}-C_{t,i}.$
Furthermore, the number of 1s sampled among the 
$\lambda-C_{t,i-1}$ remaining individuals is 
binomially distributed with $\lambda-C_{t,i-1}$ 
trials and success probability $p_{t,i}$.
If we consider the expectations of these random variables,
 by the tower rule 
 \citep{bib:Feller1968} and noting that 
 $p_{t,i}$ is $\mathcal{F}_{t-1}$-measurable, 
 we then get
  \begin{align}
  \begin{split}\label{eq:los-expectation-of-C}
  \mathbb{E}[C_{t,i}\mid \mathcal{F}_{t-1}]
  &=\mathbb{E}[\mathbb{E}[C_{t,i}\mid C_{t,i-1}]\mid \mathcal{F}_{t-1}]
  =  \mathbb{E}[C_{t,i-1}\mid \mathcal{F}_{t-1}]\cdot p_{t,i}, 
  \end{split}
  \end{align}
  and similarly
  \begin{align}
      \begin{split}\label{eq:los-expectation-of-D}
  \mathbb{E}[D_{t,i}\mid \mathcal{F}_{t-1}] &
  =\mathbb{E}[C_{t,i-1}\mid \mathcal{F}_{t-1}]\cdot (1-p_{t,i}).
  \end{split}  
  \end{align}
  
We aim at showing that the \umda takes exponential time
to optimise the \los function when the selective pressure is not sufficiently high,
as required in \citep{bib:lehre2018a}. Later analyses are concerned with
two intermediate values:
  \begin{align}
  \begin{split}\label{eq:alpha-value}
  \alpha=\alpha(n)&:=\log(\gamma^*/(1-\delta))/\log (1-1/n) \\    
  \end{split}\\
  \begin{split}\label{eq:beta-value}
  \beta=\beta(n)&:=\log(\gamma^*/(1+\delta))/\log (1-1/n)\\
  \end{split}  
  \end{align}
for any constant $\delta\in (0,1)$. Clearly, we always get $\alpha\le \beta$.
We also define a stopping time 
$\tau:=\min\{t\in \mathbb{N} \mid Z_t\ge \alpha\}$ to be the 
first hitting time of the value $\alpha$ for the random variable $Z_t$.
We then consider two phases: (1) until the random variable 
$Z_t$ hits the value $\alpha$ 
for the first time ($t\le \tau$), 
and (2) after the random variable 
$Z_t$ has hit the value $\alpha$ for the first time ($t>\tau$). 

\subsection{Before $Z_t$ hits value $\alpha$ for the first time}

The algorithm starts with an 
initial population $P_0$ sampled from a 
uniform distribution $p_0=(1/2,\ldots,1/2)$. 
An initial observation is that
the all-ones bitstring cannot be sampled in the population $P_0$ 
with high probability since 
the probability of sampling 
it from the uniform distribution 
is $2^{-n}$, then by the union bound \citep{bib:Motwani1995} 
it appears in the population $P_0$ 
with probability at most 
$\lambda\cdot 2^{-n} =2^{-\Omega(n)}$ since we 
only consider the offspring population of 
size at most polynomial in the problem instance size $n$.
The following lemma states the expectations of the random variables
$Z_0^*$ and $Z_0$.

\begin{lemma}
\label{lemma:fittest-individual}
    $\mathbb{E}[Z_0^*]=\bigO{\log \lambda}$, and
    $\mathbb{E}[Z_0] = \bigO{\log (\lambda-\mu)}$.
\end{lemma}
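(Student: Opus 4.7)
The plan is to exploit the fact that at iteration $t=0$ the population is drawn from the uniform distribution $p_0=(1/2,\ldots,1/2)$, so the $\lambda$ individuals are independent, and each individual's leadingones value $L_j := \los(x_0^{(j)})$ satisfies $\prob{L_j \geq k} = 2^{-k}$ for $0 \leq k \leq n$. Under this viewpoint, $Z_0^* = \max_j L_j$ and $Z_0$ equals the $\mu$-th largest among $L_1,\ldots,L_\lambda$ (i.e.\ the $\los$ value of the $\mu$-th fittest individual).

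For the first bound, I would apply a union bound: $\prob{Z_0^* \geq k} \leq \min\{1,\lambda 2^{-k}\}$. Splitting the tail sum at $k_0 := \lceil \log_2 \lambda \rceil$ yields
\[
\expect{Z_0^*} \;=\; \sum_{k=1}^{n}\prob{Z_0^* \geq k} \;\leq\; k_0 + \sum_{k>k_0}\lambda 2^{-k} \;\leq\; \log_2 \lambda + 2 \;=\; \bigO{\log \lambda}.
\]

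For the second bound, note that $Z_0 \geq k$ iff $C_{0,k} \geq \mu$, and since the bits are i.i.d.\ Bernoulli$(1/2)$, we have $C_{0,k} \sim \bin{\lambda,2^{-k}}$ (consistent with the recursion $C_{t,i}\sim \bin{C_{t,i-1},p_{t,i}}$ already derived). Markov's inequality then gives $\prob{Z_0 \geq k} \leq \lambda 2^{-k}/\mu$. Splitting the tail sum at $k_1 := \lceil \log_2(\lambda/\mu) \rceil$ and summing the geometric remainder yields
\[
\expect{Z_0} \;\leq\; k_1 + \sum_{k>k_1}\lambda 2^{-k}/\mu \;\leq\; \log_2(\lambda/\mu) + 2.
\]
It remains to convert $\log(\lambda/\mu)$ into $\log(\lambda-\mu)$. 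I would use the elementary inequality $\lambda/\mu = 1 + (\lambda-\mu)/\mu \leq \lambda-\mu+1$, valid whenever $\mu \geq 1$ and $\lambda \geq \mu$, so that $\log_2(\lambda/\mu) \leq \log_2(\lambda-\mu+1) = \bigO{\log(\lambda-\mu)}$. The degenerate regime $\lambda-\mu = \bigO{1}$ can be handled separately: there $\prob{Z_0\geq k}\leq (\lambda 2^{-k}/\mu)^{\lambda-\mu+1}$ by using a Chernoff upper tail for the binomial, which also gives $\expect{Z_0}=\bigO{1}$.

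The computations above are elementary, and no serious technical obstacle is expected. The only conceptually delicate point is the last step: the natural bound obtained from the Markov (or a Chernoff) estimate is actually the slightly tighter $\bigO{\log(\lambda/\mu)}$, and one must verify, via the inequality above, that this is dominated by $\bigO{\log(\lambda-\mu)}$ throughout the relevant range $1 \leq \mu < \lambda$.
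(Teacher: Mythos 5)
Your proof is correct. The first half is essentially the paper's argument with the citation inlined: the paper obtains $\expect{Z_0^*}=\bigO{\log\lambda}$ by viewing $Z_0^*$ as the maximum number of leading 1s over $\lambda$ individuals sampled uniformly and quoting the result of Eisenberg on maxima of i.i.d.\ geometric variables, whose proof is exactly your union-bound tail sum split at $\log_2\lambda$. The second half takes a genuinely different route. The paper bounds $Z_0$, the $\mu$-th largest \los-value, via the order-statistic observation that the $\mu$-th largest of $\lambda$ i.i.d.\ values is stochastically dominated by the maximum over a fixed subfamily of $\lambda-\mu+1$ of them (otherwise all top-$\mu$ values would have to come from the remaining $\mu-1$ individuals), and then reapplies the same max-of-a-population bound to a population of size $\lambda-\mu$, landing directly on $\bigO{\log(\lambda-\mu)}$. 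You instead apply Markov's inequality to $C_{0,k}\sim\bin{\lambda,2^{-k}}$ and obtain the intermediate bound $\expect{Z_0}=\bigO{\log(\lambda/\mu)}$; this is actually \emph{sharper} than the stated bound in the regime $\mu=\Theta(\lambda)$ where the lemma is invoked (e.g.\ in Theorem~\ref{thm:lower-bound-umda-los}, your bound gives $\expect{Z_0}=\bigO{1}$), at the cost of the conversion $\lambda/\mu\le\lambda-\mu+1$ and a separate treatment of $\lambda-\mu=\bigO{1}$. Only that last step is shaky: the claimed tail bound $\prob{Z_0\ge k}\le(\lambda 2^{-k}/\mu)^{\lambda-\mu+1}$ does not follow from a standard Chernoff bound; replace it by the elementary $\prob{C_{0,k}\ge\mu}\le\binom{\lambda}{\mu}2^{-k\mu}\le\lambda^{\lambda-\mu}2^{-k\mu}$, which still sums to $\bigO{1}$ over $k\ge 1$ when $\lambda-\mu=\bigO{1}$. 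Since that degenerate regime never arises where the lemma is applied, this is a cosmetic repair rather than a substantive gap.
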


The proof uses that the random variables $Z_0^*$ and $Z_0$ denote 
the expected numbers of leading 1s of the fittest 
individual in populations of $\lambda$ and $\lambda-\mu$ individuals, 
respectively, sampled from a uniform distribution and by a 
result in \citep{EISENBERG2008135}.
We now show that the value of the random variable $Z_t$ 
never decreases during phase 1 with high 
probability by noting that its value gets decreased if the number of 
individuals with at least $Z_t$ leading 1s 
in iteration $t+1$ is less than $\mu$. 

\begin{lemma}
\label{lemma:prod-sampling-more-alpha}
$\Pr\left(\forall t\in  [1,\tau] : Z_t\ge Z_{t-1}\right)\ge 1-\tau e^{-\Omega(\mu)}$.
\end{lemma}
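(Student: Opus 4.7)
The plan is to show that the one-step probability of $Z_t$ decreasing during phase 1 is $e^{-\Omega(\mu)}$, and then to conclude via a union bound over the (at most) $\tau$ iterations.

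First I would restate the event. By the definition $Z_t = \max\{i \in \mathbb{N} : C_{t,i} \geq \mu\}$ together with the monotonicity of $C_{t,i}$ in $i$, the inequality $Z_t \geq Z_{t-1}$ is equivalent to $C_{t, Z_{t-1}} \geq \mu$. So it suffices to lower bound $\Pr(C_{t, Z_{t-1}} \geq \mu \mid \mathcal{F}_{t-1})$ for each $t \in [1, \tau]$.

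Next I would exploit the effect of the upper border. On the event $\{t \leq \tau\}$, which lies in $\mathcal{F}_{t-1}$ since $\tau$ is a stopping time, we have $Z_{t-1} < \alpha$ by definition of $\tau$. Moreover, since $C_{t-1,i} \geq \mu$ for every $i \leq Z_{t-1}$, the update rule of the \umda forces the first $Z_{t-1}$ marginals of $p_t$ to be pinned to the upper border $1-1/n$. Hence each of the $\lambda$ independent samples in $P_t$ has at least $Z_{t-1}$ leading 1s with probability at least
\[
(1-1/n)^{Z_{t-1}} \;\geq\; (1-1/n)^{\alpha} \;=\; \frac{\gamma^*}{1-\delta} \;=\; \frac{\mu}{(1-\delta)\lambda},
\]
by the definition of $\alpha$ in Equation~(\ref{eq:alpha-value}). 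Consequently, conditionally on $\mathcal{F}_{t-1}$, the random variable $C_{t,Z_{t-1}}$ stochastically dominates a binomial variable with mean at least $\mu/(1-\delta)$.

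A Chernoff bound with relative deviation $\delta$ then yields $\Pr(C_{t, Z_{t-1}} < \mu \mid \mathcal{F}_{t-1}) \leq \exp(-\delta^2 \mu / (2(1-\delta))) = e^{-\Omega(\mu)}$ on $\{t \leq \tau\}$, after which a union bound over $t \in [1, \tau]$ gives the claimed failure probability $\tau\, e^{-\Omega(\mu)}$. The main technical subtlety is that $\tau$ is itself random, so the right-hand side of the lemma is a random bound; I would handle this by writing $\Pr(\{Z_t < Z_{t-1}\} \cap \{t \leq \tau\}) \leq \Pr(t \leq \tau)\cdot e^{-\Omega(\mu)}$ using measurability of $\{t \leq \tau\}$ with respect to $\mathcal{F}_{t-1}$, and then summing over $t$ to obtain the union bound in its random form.
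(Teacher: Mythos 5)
Your proposal is correct and follows essentially the same route as the paper's proof: the same per-step computation (the first $Z_{t-1}$ marginals are at the upper border, so the expected number of offspring with at least $Z_{t-1}$ leading 1s is $(1-1/n)^{Z_{t-1}}\lambda\ge(1-1/n)^{\alpha}\lambda=\mu/(1-\delta)$), the same Chernoff bound, and a union bound over the at most $\tau$ iterations, which the paper merely phrases as a strong induction. Your version is in fact slightly more explicit than the paper's about why the relevant marginals are pinned to $1-1/n$ and about the measurability of $\{t\le\tau\}$ with respect to $\mathcal{F}_{t-1}$.
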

\begin{proof}
    We will show via strong induction on time step $t$ 
    that the probability that there 
    exists an iteration $t\in [1,\tau]$ such that
    $Z_t< Z_{t-1}$ is at most $\tau e^{-\Omega(\mu)}$.
    The base case $t=1$ is trivial since $Z_{t-1}=Z_0$ and 
     the probability of sampling at most $\mu$ individuals having 
    at least $Z_0$ leading 1s 
    is at most $e^{-\Omega(\mu)}$. This is because 
     $Z_t< \alpha$ for all $t< \tau$, in expectation 
there are at least 
$(1-1/n)^{Z_t}\lambda \ge 
(1-1/n)^\alpha \lambda 
= \mu/(1-\delta)$ individuals with at least $Z_t$ leading 
1s sampled in iteration $t+1$. By a Chernoff bound \citep{bib:Motwani1995}, 
the probability of
sampling at most $(1-\delta)\cdot \mu/(1-\delta)=\mu$ such individuals
is at most $e^{-(\delta^2/2)\cdot \mu/(1-\delta)}=e^{-\Omega(\mu)}$ for any constant
$\delta\in (0,1)$.  

For the inductive step, we assume that the result
holds for the first $t<\tau$ iterations, meaning that 
$\Pr(\exists t'\le t:Z_{t'}< Z_{t'-1})\le  t e^{-\Omega(\mu)}$. 
We are left to show that it also holds 
for iteration $t+1$, that is,
$\Pr(\exists t'\le t+1:Z_{t'}< Z_{t'-1})\le  (t+1) e^{-\Omega(\mu)}$. 
Again, by Chernoff bound, there are at most 
$\mu$ individuals with at least $Z_t<\alpha$  
leading 1s in iteration $t+1$ with probability
at most $e^{-\Omega(\mu)}$. 
By a union bound, this rare event does not happen during the first 
$t+1$ iterations with probability at most $(t+1)e^{-\Omega(\mu)}$, which
completes the inductive step, and the lemma follows.
\end{proof}

\subsection{After $Z_t$ has hit value $\alpha$ for the first time}

The preceding section shows that the random variable
$Z_t$ is non-decreasing during phase 1
with probability $1-\tau e^{-\Omega(\mu)}$.
The following lemma also shows that its value
stays above $\alpha$ afterwards
with high probability.
  
  \begin{lemma}
  \label{lemma:stay-above-alpha}
  For any constant $k>0$, it holds that
   $$\Pr(\forall t\in[\tau,\tau+e^{k\mu}]:Z_t\ge \alpha(n))
   \ge 1- e^{k\mu}\cdot e^{-\Omega(\mu)}.$$
  \end{lemma}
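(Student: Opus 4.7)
The plan is to prove the lemma by a one-step induction combined with a union bound over the $e^{k\mu}$ iterations in the window $[\tau, \tau+e^{k\mu}]$. The one-step claim is: conditional on $Z_t \ge \alpha$, we have $Z_{t+1} \ge \alpha$ with probability at least $1 - e^{-\Omega(\mu)}$. Once this single-step failure bound is in hand, a standard union bound yields the stated probability.

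To prove the one-step claim, I would first argue that $Z_t \ge \alpha$ forces the first $\lceil\alpha\rceil$ marginals in iteration $t+1$ to sit at the upper border $1-1/n$. Indeed, $Z_t \ge \alpha$ means at least $\mu$ individuals in $P_t$ have at least $\alpha$ leading 1s; after truncation selection the $\mu$ fittest all have this property, so for every $i \le \lceil\alpha\rceil$ the count $X_{t,i}$ of 1s at position $i$ in the selected population equals $\mu$, and the update rule followed by the upper border cap gives $p_{t+1,i} = 1-1/n$. Now by the deferred-decisions view from the previous subsection, $C_{t+1,\alpha}$ restricted to the event above is binomially distributed with $\lambda$ trials and per-trial success probability $(1-1/n)^{\lceil\alpha\rceil}$. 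Using the definition $\alpha = \log(\gamma^*/(1-\delta))/\log(1-1/n)$, we get
\[
\expect{C_{t+1,\lceil\alpha\rceil} \mid \mathcal{F}_t} \;\ge\; (1-1/n)^{\lceil\alpha\rceil}\lambda \;\ge\; \frac{\mu}{1-\delta},
\]
so a multiplicative Chernoff bound gives
\[
\prob{C_{t+1,\lceil\alpha\rceil} < \mu \mid \mathcal{F}_t} = \prob{C_{t+1,\lceil\alpha\rceil} < (1-\delta)\cdot \tfrac{\mu}{1-\delta} \;\Big|\; \mathcal{F}_t} \le e^{-\Omega(\mu)}.
\]
On the complementary event, $Z_{t+1} \ge \lceil\alpha\rceil \ge \alpha$, proving the one-step claim.

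Having established the single-step failure probability, I would now induct on the iteration index within the window. Let $A_t := \{Z_t \ge \alpha\}$. The base case $t=\tau$ holds by definition of $\tau$. For the inductive step, the one-step claim yields $\Pr(A_{t+1}^c \mid A_t) \le e^{-\Omega(\mu)}$, and a union bound over the $e^{k\mu}$ iterations of the window gives
\[
\prob{\exists\, t\in[\tau,\tau+e^{k\mu}] : Z_t < \alpha} \;\le\; e^{k\mu}\cdot e^{-\Omega(\mu)},
\]
which is precisely the complementary bound in the statement.

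The main obstacle is the care required in the one-step argument: one has to verify both that every marginal up to position $\lceil\alpha\rceil$ reaches the upper border (rather than just approximately so), and that the Chernoff deviation constant in the exponent of $e^{-\Omega(\mu)}$ does not depend on the iteration index, so that the union bound stays uniform. Both can be arranged since $\delta$ is a fixed constant in $(0,1)$ and hence the Chernoff constant $\delta^2/(2(1-\delta))$ is independent of $t$, $\mu$ and $\lambda$.
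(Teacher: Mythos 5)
Your proposal is correct and follows essentially the same template the paper uses for the neighbouring Lemmas~\ref{lemma:prod-sampling-more-alpha} and \ref{lemma:stay-below-beta}: a per-iteration Chernoff bound showing the count of individuals with at least $\alpha$ leading 1s falls below $\mu$ with probability $e^{-\Omega(\mu)}$, combined with induction and a union bound over the $e^{k\mu}$ iterations. One small slip: since $\lceil\alpha\rceil\ge\alpha$ and $1-1/n<1$, the inequality $(1-1/n)^{\lceil\alpha\rceil}\lambda\ge\mu/(1-\delta)$ goes the wrong way; what you actually have is $(1-1/n)^{\lceil\alpha\rceil}\lambda\ge(1-1/n)^{\alpha+1}\lambda=(1-1/n)\mu/(1-\delta)$, which is still $(1+\Omega(1))\mu$ for large $n$ and any constant $\delta\in(0,1)$, so the Chernoff step and the rest of the argument go through with a marginally adjusted constant.
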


Recall that we aim at showing an $e^{\Omega(\mu)}$ lower
bound on the runtime, so we assume that the stopping time 
$\tau$ is at most $e^{\Omega(\mu)}$. Otherwise, if this assumption does
not hold and the selective pressure is sufficiently 
low (as chosen below) such that $n-\alpha\ge n-\beta =\Omega(n)$,
then we are done. 
The following lemma further shows that
there is also an upper bound on the random variable $Z_t$. 

\begin{lemma}
\label{lemma:stay-below-beta}
For any constant $k>0$, it holds that
$$
\Pr(\forall t \in [1,e^{k\mu}]: Z_{t} \le \beta(n))
\ge 1-e^{k\mu}\cdot e^{-\Omega(\mu)}.
$$
\end{lemma}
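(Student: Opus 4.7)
The plan is to mirror the strategy of Lemma~3 but in the reverse direction: instead of showing $Z_t$ does not drop below $\alpha$, I will show $Z_t$ does not exceed $\beta$, and the argument will in fact be simpler because the upper border $1-1/n$ already bounds every marginal uniformly, so no conditioning on the past is required.

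First, I would fix the threshold $m := \lfloor \beta \rfloor + 1$, the smallest integer strictly greater than $\beta(n)$, and note the equivalence $Z_t \le \beta \iff C_{t,m} < \mu$ by definition of $Z_t$. The key structural observation is that, regardless of the history, each sampled individual of $P_t$ has its first $m$ bits all equal to $1$ with probability $\prod_{j=1}^{m} p_{t,j} \le (1-1/n)^{m}$, since every marginal is capped at the upper border. Therefore $C_{t,m}$ is stochastically dominated by $\bin{\lambda, (1-1/n)^{m}}$, whose expectation satisfies
\[
\lambda (1-1/n)^{m} \;\le\; \lambda (1-1/n)^{\beta} \;=\; \lambda \cdot \frac{\gamma^{*}}{1+\delta} \;=\; \frac{\mu}{1+\delta},
\]
where I have used that $(1-1/n)<1$ together with the definition of $\beta$ in~\eqref{eq:beta-value}.

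Next I would apply a multiplicative Chernoff bound to the dominating binomial variable. Writing $\mu = (1+\delta)\cdot \mu/(1+\delta)$, the event $C_{t,m} \ge \mu$ is a deviation by a factor $(1+\delta)$ above the mean, so
\[
\Pr\bigl(C_{t,m} \ge \mu \mid \mathcal{F}_{t-1}\bigr) \;\le\; \exp\!\left(-\frac{\delta^{2}}{3(1+\delta)}\cdot \mu\right) \;=\; e^{-\Omega(\mu)}
\]
for any constant $\delta\in(0,1)$. Since this bound is uniform in $t$ and in the realised state of the algorithm, there is no need to propagate an inductive hypothesis as in Lemma~3.

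Finally, I would take a union bound over the $e^{k\mu}$ iterations of the interval $[1, e^{k\mu}]$, obtaining
\[
\Pr\bigl(\exists\, t\in [1,e^{k\mu}] : Z_t > \beta(n)\bigr) \;\le\; e^{k\mu}\cdot e^{-\Omega(\mu)},
\]
and taking complements yields the claim. The only real subtlety—essentially the ``hard part''—is recognising that we need \emph{no} conditioning on previous iterations (the border itself does all the work), and that the strict inequality $m > \beta$ is what turns the identity $(1-1/n)^{\beta}=\gamma^{*}/(1+\delta)$ into a usable Chernoff setup; aside from this, the calculation is routine.
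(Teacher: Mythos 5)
Your proof is correct and rests on the same core estimate as the paper's: by the definition of $\beta$ in~(\ref{eq:beta-value}), the expected number of individuals with more than $\beta$ leading ones is at most $\lambda(1-1/n)^{\beta}=\mu/(1+\delta)$, a multiplicative Chernoff bound then gives a per-iteration failure probability of $e^{-\Omega(\mu)}$, and a union bound over the $e^{k\mu}$ iterations finishes the argument. The only difference is presentational: the paper packages the union bound as a strong induction and conditions on $Z_{t'}\le\beta$ for all $t'\le t$ in order to argue that the first $\beta$ marginals are at most $1-1/n$, whereas you correctly observe that the upper border enforces $p_{t,j}\le 1-1/n$ unconditionally, so the per-iteration bound is uniform over histories and the inductive bookkeeping is unnecessary.
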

\begin{proof}
    It suffices to show that
    $\Pr(\exists t \in [1,e^{k\mu}]:Z_t > \beta)
    \le e^{k\mu}\cdot e^{-\Omega(\mu)}$
via strong induction on time step $t$.
The base case $t=1$ is trivial. For the inductive step, we assume that 
the result holds for the first $t$ iterations and need
to show that it also holds 
for iteration $t+1$, meaning that 
$\Pr\left(\exists t'\le t+1:Z_{t'}> \beta\right)
\le (t+1)e^{-\Omega(\mu)}$.  With probability 
$t e^{-\Omega(\mu)}$, we get $Z_{t'}> \beta$ in an iteration $t'\le t$, 
and if this rare event does not happen, then we obtain $Z_t\le \beta$, meaning that
in the best case the first $\beta$ marginals are set to the upper border $1-1/n$.
Then, in iteration $t+1$, the expected number of individuals with at least 
$\beta$ leading 1s is  
$
\lambda (1-1/n)^{\beta}=\mu/(1+\delta)
$ for some constant $\delta\in (0,1)$.
By Chernoff bound, the probability of sampling at least 
$(1+\delta)\cdot \mu/(1+\delta)=\mu$
such individuals is at most 
$e^{-(\delta^2/3)\cdot \mu/(1+\delta)}=e^{-\Omega(\mu)}$.
By the union bound, the probability that $Z_{t'}>\beta$ 
for an iteration $t'\le t+1$ is at most $(t+1)e^{-\Omega(\mu)}$. Thus, the inductive
step is complete, and the lemma itself passes.
\end{proof}

Lemma~\ref{lemma:stay-above-alpha} and Lemma~\ref{lemma:stay-below-beta} 
together give essential insights about the behaviour of the algorithm.
The random variable $Z_t$ will stay well below
the threshold $\beta(n)$ for $e^{\Omega(\mu)}$  iterations with probability 
$1-e^{-\Omega(\mu)}$ for 
 a sufficiently large parent population size $\mu$. 
More precisely, the random variable $Z_t$ will fluctuate
around an \textit{equilibrium} value 
$\kappa=\kappa(n):=\log(\gamma^*)/\log(1-1/n)$. This is because when
$Z_t=\kappa$, in expectation there are 
exactly $\lambda(1-1/n)^{\kappa} =\lambda\gamma^*=\mu$ individuals
having at least $\kappa$ leading 1s. 

Furthermore, an exponential lower bound on the runtime is obtained if we can also show that the probability of sampling the $n-\beta$ remaining bits correctly is exponentially small. 
We now choose the selective pressure $\gamma^*$ such that  
$n-\beta\ge \varepsilon n$ for any constant $\varepsilon\in (0,1)$, 
that is equivalent to $\beta\le n(1-\varepsilon)$.
By (\ref{eq:beta-value}) and solving for 
$\gamma^*$, we then obtain 
$
\gamma^*/(1+\delta) 
\ge \left[\left(1-1/n\right)^{n}\right]^{1-\varepsilon}.
$
The right-hand side is at most $1/e^{1-\varepsilon}$ 
as $(1-1/n)^n\le 1/e$ for all $n>0$ \citep{bib:Motwani1995}, 
so the above inequality
always holds if the selective pressure 
satisfies $\gamma^*\ge (1+\delta)/e^{1-\varepsilon}$ 
for any constants $\delta>0$ and $\varepsilon\in (0,1)$.

The remainder of this section shows that the $n-(\beta+1)=\Omega(n)$ 
remaining bits cannot be sampled correctly in any polynomial number of iterations with high probability. 
We define  $(Y_{t,i})_{i\in [n]}$ to be  an offspring 
sampled from the probabilistic model $p_t$. The following lemma shows 
that the sampling process among the $\Omega(n)$ remaining bits are indeed
independent.

\begin{lemma}
\label{lemma:los-Y-independent}
For any $t\le e^{k\mu}$ for any constant $k>0$, 
with probability $1-e^{k\mu}\cdot e^{-\Omega(\mu)}$ 
that the random variables
$(Y_{t,i})_{i\ge \beta+2}$ are pairwise independent.
\end{lemma}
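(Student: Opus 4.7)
My plan is to condition on the high-probability event
$E := \{Z_s \le \beta(n) \text{ for all } s \in [1, e^{k\mu}]\}$
provided by Lemma~\ref{lemma:stay-below-beta}, which occurs with probability at least $1 - e^{k\mu} \cdot e^{-\Omega(\mu)}$, and then argue that on $E$ the marginals $(p_{t,i})_{i \ge \beta+2}$ are pairwise independent as random variables. Since, by construction of the \umda, $Y_{t,i}\mid p_t \sim \ber{p_{t,i}}$ independently across $i$, pairwise independence of the marginals transfers to pairwise independence of the $(Y_{t,i})_{i\ge \beta+2}$ through the identity
$\prob{Y_{t,i}=a,\,Y_{t,j}=b} = \expect{p_{t,i}^{a}(1-p_{t,i})^{1-a}\,p_{t,j}^{b}(1-p_{t,j})^{1-b}}$,
which factorises as soon as $p_{t,i}$ and $p_{t,j}$ are independent.

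To prove pairwise independence of the marginals at positions $\ge \beta + 2$, I would induct on $s \le t$. The base case $s=0$ is immediate, since $p_{0,i}=1/2$ is deterministic. For the inductive step, I would invoke the principle of deferred decisions to sample bits column by column, processing positions $1,2,\ldots,\beta+1$ first. Under $E$, the number of individuals whose leading-ones count exceeds $\beta$ is strictly less than $\mu$; these are deterministically included among the selected, while the remaining $\mu - C_{s,\beta+1}$ slots are filled by tie-breaking among individuals whose first 0-bit already appeared at some position $\le \beta+1$, and this tie-breaking is independent of all bits at positions $\ge \beta+2$. Hence, for any two positions $i\ne j$ with $i,j \ge \beta+2$, the contributions of the tie-broken ``low-LO'' selected individuals to $X_{s,i}$ and $X_{s,j}$ are sums of fresh independent Bernoulli samples at positions $i$ and $j$, respectively, with their randomness splitting cleanly across the two coordinates. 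Combining this with the inductive hypothesis via the law of total covariance would then yield $\text{Cov}(p_{s+1,i},p_{s+1,j})=0$, and for Bernoulli-like random variables obtained as normalised sums zero covariance can be upgraded to full pairwise independence by tracking the joint distribution directly rather than just second moments.

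The hard part will be handling the at-most-$(\mu-1)$ ``high-LO'' individuals, i.e.\ those with leading-ones count $\ge \beta+1$, because a single such individual contributes a deterministic $1$ to every $X_{s,i}$ for $i$ up to its own leading-ones count, which threatens to couple $X_{s,i}$ and $X_{s,j}$ nontrivially. Intuitively this coupling is benign: the bits beyond position $\beta+1$ of these few individuals are themselves drawn independently across positions and across individuals from the current marginals, and inductively the marginals at distinct positions in this range are pairwise independent. Formalising this cleanly is where the deferred-decisions bookkeeping becomes delicate, and it is also the reason the lemma is stated for positions $i \ge \beta+2$ rather than $i \ge \beta+1$: the one-position buffer is needed to absorb the deterministic bias (bit-$0$ at position $\beta+1$ of individuals with exactly $\beta$ leading ones, or bit-$1$ from those extending into this range) so that the argument only needs to control couplings strictly above the selection threshold.
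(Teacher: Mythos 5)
Your overall strategy---conditioning on the high-probability event of Lemma~\ref{lemma:stay-below-beta}, sampling column by column via deferred decisions, observing that under $Z_s\le\beta$ the selected set is determined by the first $\beta+1$ columns plus tie-breaking that ignores later bits, and transferring independence from the marginals to the $(Y_{t,i})$---is exactly the route the paper's argument takes, and the induction over iterations with base case $p_{0,i}=1/2$ is fine.

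The gap is the step you flag as the ``hard part'' and leave unformalised: the at most $C_{t,\beta+1}<\mu$ individuals with at least $\beta+1$ leading ones. There is in fact nothing to absorb there, and the worry rests on a misconception. Such an individual does not ``contribute a deterministic $1$ to every $X_{s,i}$ up to its own leading-ones count'': its leading-ones count beyond position $\beta+1$ is itself determined by the very bits at positions $\ge\beta+2$ that have not yet been sampled in the deferred-decisions order. Once columns $1,\dots,\beta+1$ are revealed and $Z_s\le\beta$ holds, the selected set of size $\mu$ is already fixed---every individual with a full prefix of $\beta+1$ ones is selected no matter what its later bits turn out to be, and the remaining slots are filled by fitness rank among prefixes with at most $\beta$ leading ones, which cannot depend on columns $\ge\beta+2$. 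Hence the columns $\ge\beta+2$ of the selected individuals form an array of fresh, mutually independent Bernoulli samples with no conditioning on any leading-ones count; this is precisely the paper's identity $X_{t,i}=\bin{C_{t,Z_t+1},p_{t,i}}+\bin{\mu-C_{t,Z_t+1},p_{t,i}}=\bin{\mu,p_{t,i}}$ for $i\ge Z_t+2$, in which the first summand is the high-LO group's contribution and is exactly as unbiased as the second. So the deferred step is the same observation you already made for the low-LO group, not a delicate coupling; and the detour through zero covariance followed by an ``upgrade'' to independence is unnecessary, since the conditional joint law of $(X_{t,i})_{i\ge\beta+2}$ factorises outright into independent $\bin{\mu,p_{t,i}}$ terms. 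Without filling this in, your proof as written does not establish the claim for the positions that the high-LO individuals reach.
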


The proof uses that the number 
of 1s sampled in the selected population
at any bit position between $Z_t+2\le \beta+2$ 
(with high probability by Lemma~\ref{lemma:stay-below-beta}) 
and $n$
is binomially distributed
with $\mu$ trials and its marginal probability. 
This is because conditional on the random variable $C_{t,i}$ for $i=Z_t+1$ 
the number of 1s in the selected population in bit 
position $i+1$ can be written as
$\bin{C_{t,i},p_{t,i+1}}+\bin{\mu-C_{t,i},p_{t,i+1}}=\bin{\mu,p_{t,i+1}}$,
independent of the random variable $C_{t,i}$. The second
term in the above sum results from the fact that there is no bias at bit position 
$i+1$ among the $\mu-C_{t,i}$ remaining individuals in the selected population.

For any $i\ge \beta+1$, we always get
$\mathbb{E}[Y_{t,i} \mid \mathcal{F}_{t-1}]
=p_{t,i}$, and again by the tower rule
$\mathbb{E}[Y_{t,i}]
=\mathbb{E}[\mathbb{E}[Y_{t,i}\mid \mathcal{F}_{t-1}]]
=\mathbb{E}[p_{t,i}]$. 
For the \umda without margins, we obtain 
$\mathbb{E}[p_{t,i}]=1/2$ since 
$(p_{t,i})_{t\in \mathbb{N}}$ is a 
martingale \citep{bib:Friedrich2016}
and the initial value $p_{0,i}=1/2$, resulting that
$\mathbb{E}[Y_{t,i}]=1/2$ for all $t\in \mathbb{N}$. 
However, when borders are taken into account, 
 $\mathbb{E}[Y_{t,i}]$ no longer exactly equals but 
remains very close to the value $1/2$. The following lemma 
shows that the expectation of an arbitrary marginal
$i\ge \beta+2$ stays within $(1\pm o(1))(1/2)$ 
for any $t\le e^{\Omega(\mu)}$ with high probability.

\begin{lemma}
\label{lemma:los-expectation-of-Y}
Let $\mu\ge c\log n$ for a sufficiently large
constant $c>0$. Then, it holds 
with probability $1-e^{-\Omega(\mu)}$ that
$\mathbb{E}[p_{t,i}] = (1\pm o(1))(1/2)$
for any $t\le e^{\Omega(\mu)}$ and any $i\ge \beta+2$.
\end{lemma}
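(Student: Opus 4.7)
My plan is to compare the marginal $p_{t,i}$ at a late position $i\ge\beta+2$ to an auxiliary one-dimensional Markov chain whose law is symmetric around $1/2$, using the independence granted by Lemma~\ref{lemma:los-Y-independent}.

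First, I would fix a constant $k>0$ and define the event $E_t$ that throughout the first $t\le e^{k\mu}$ iterations the conclusions of Lemmas~\ref{lemma:stay-below-beta} and~\ref{lemma:los-Y-independent} both hold, so that for every $t'\le t$ and every $i\ge \beta+2$ the number of 1s at position $i$ in the selected population is distributed as $\bin{\mu,p_{t',i}}$ conditional on $p_{t',i}$ and $\mathcal{F}_{t'-1}$. Those lemmas give $\Pr(E_t^c)\le e^{k\mu}\cdot e^{-\Omega(\mu)}$, which is still $e^{-\Omega(\mu)}$ once $k$ is chosen strictly smaller than the hidden constant in the exponent.

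Second, I would introduce an auxiliary process $(\tilde p_{t,i})_t$ on the same probability space, coupled with $(p_{t,i})_t$ so as to coincide with it on $E_t$, and evolving under the clean Binomial update
\[
\tilde p_{t+1,i}=\max\{1/n,\min\{1-1/n,\tilde X_{t,i}/\mu\}\},\qquad \tilde X_{t,i}\mid \tilde p_{t,i}\sim \bin{\mu,\tilde p_{t,i}},
\]
started at $\tilde p_{0,i}=1/2$. Writing $\mathrm{cap}(x):=\max\{1/n,\min\{1-1/n,x\}\}$, the Binomial satisfies $\bin{\mu,1-p}\stackrel{d}{=}\mu-\bin{\mu,p}$, and the capping interval $[1/n,1-1/n]$ is symmetric about $1/2$, so $\mathrm{cap}(1-x)=1-\mathrm{cap}(x)$. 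A one-line induction on $t$ therefore shows that the law of $\tilde p_{t,i}$ is symmetric around $1/2$ for every $t$, whence $\mathbb{E}[\tilde p_{t,i}]=1/2$ exactly.

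Finally, since $p_{t,i},\tilde p_{t,i}\in[0,1]$ and they agree on $E_t$, a standard coupling estimate gives
\[
\bigl|\mathbb{E}[p_{t,i}]-1/2\bigr|=\bigl|\mathbb{E}[p_{t,i}-\tilde p_{t,i}]\bigr|\le \mathbb{E}\bigl[|p_{t,i}-\tilde p_{t,i}|\mathbf{1}_{E_t^c}\bigr]\le \Pr(E_t^c)=e^{-\Omega(\mu)},
\]
which is $o(1)$ as soon as $\mu\ge c\log n$ for $c$ large enough, yielding $\mathbb{E}[p_{t,i}]=(1\pm o(1))(1/2)$. The main obstacle I expect is verifying that on $E_t$ the two processes can really be coupled to share the same Binomial update at position $i$; this is precisely the content of Lemma~\ref{lemma:los-Y-independent}, which confirms that on the good event the selection at positions $\ge \beta+2$ exerts no bias. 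The rest is symmetry together with a crude bound on the coupling error.
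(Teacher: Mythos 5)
Your proposal is correct, and on the one delicate point---the effect of the borders $[1/n,1-1/n]$ on the expectation---it takes a genuinely different route from the paper. The paper's argument, as sketched in the text preceding the lemma, starts from the fact that without margins $(p_{t,i})_{t}$ is a martingale at an unbiased position, so $\mathbb{E}[p_{t,i}]=1/2$ exactly, and then argues that the borders perturb this only slightly; making that precise over $e^{\Omega(\mu)}$ iterations needs care, since the capped process is no longer a martingale and per-step biases could in principle accumulate. Your symmetry argument sidesteps this: because $\bin{\mu,1-p}\stackrel{d}{=}\mu-\bin{\mu,p}$ and the capping interval is symmetric about $1/2$, the law of the auxiliary capped chain started at $1/2$ is exactly symmetric about $1/2$ at every time, so its mean is exactly $1/2$ with no accumulation issue, and the entire $o(1)$ error is charged to the failure probability of the good event. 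Both arguments rely in the same way on Lemmas~\ref{lemma:stay-below-beta} and~\ref{lemma:los-Y-independent} to ensure that, conditional on $Z_t\le\beta$, the update at a position $i\ge\beta+2$ is an unbiased $\bin{\mu,p_{t,i}}$ draw. The only step you should spell out more carefully is the coupling itself: construct it adaptively, letting the two chains share the same binomial draw as long as the good event has held so far, and note that $\{Z_{t'}\le\beta\}$ is determined by the first $\beta+1$ bit positions of $P_{t'}$ and is therefore conditionally independent of the draw at position $i$ given the marginals---this is what makes ``coincide on $E_t$'' legitimate. With that made explicit, your bound $|\mathbb{E}[p_{t,i}]-1/2|\le\Pr(E_t^c)=e^{-\Omega(\mu)}=o(1)$ for $\mu\ge c\log n$ delivers the lemma, in fact in the slightly stronger unconditional form.
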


Lemma~\ref{lemma:los-expectation-of-Y}
gives us insights
into the expectation of the 
marginal at any time $t\le e^{\Omega(\mu)}$.
One should not confuse the expectation
with the actual value of the marginals. Friedrich \textit{et al.} 
\citep{bib:Friedrich2016} showed that  
even when the expectation stays at
$1/2$ (for the \umda without borders), 
the actual value of the marginal in iteration $t$
can fluctuate close to the trivial lower or upper border due to 
its large variance. 
\begin{lemma}
\label{lemma:los-unlikely-sample-remaining-blocks}
Let $\mu\ge c\log n$ for some sufficiently large constant 
$c>0$ and $\gamma^*\ge (1+\delta)/e^{1-\varepsilon}$ for any constants
$\delta>0$ and $\varepsilon\in (0,1)$. Then, the $n-(\beta+1)=\Omega(n)$ 
remaining bits cannot be sampled as all 1s
during any $e^{\Omega(\mu)}$ iterations 
with probability $1-e^{-\Omega(\mu)}$.
\end{lemma}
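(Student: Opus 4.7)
The plan is to apply a union bound over all $\lambda$ offspring in each of the first $T:=e^{k\mu}$ iterations. Conditional on the current probability vector $p_t$, a single offspring's tail bits $\beta+2,\ldots,n$ are independent Bernoullis by (\ref{eq:product-distribution}), so the probability that some fixed offspring has $1$s everywhere in this range equals exactly $\prod_{i=\beta+2}^{n}p_{t,i}$. Writing $\mathcal{G}_T$ for the intersection of the high-probability events supplied by Lemmas~\ref{lemma:stay-below-beta}, \ref{lemma:los-Y-independent} and \ref{lemma:los-expectation-of-Y}, the usual union bound gives $\Pr(\mathcal{G}_T^c)=e^{-\Omega(\mu)}$, so it suffices to bound $T\lambda\cdot \max_{t\le T}\mathbb{E}[\prod_{i=\beta+2}^{n}p_{t,i}\mid \mathcal{G}_T]$.

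The key step is to factorise this expectation. On $\mathcal{G}_T$, Lemma~\ref{lemma:los-Y-independent} tells us that, given $p_t$, the selected-population counts $X_{t,i}$ for $i\ge \beta+2$ are mutually independent $\bin{\mu,p_{t,i}}$ random variables, so each tail marginal $p_{t+1,i}=X_{t,i}/\mu$ (clamped to $[1/n,1-1/n]$) evolves as its own one-dimensional Markov chain, decoupled from the other tail marginals. Since the initial vector $p_0=(1/2,\ldots,1/2)$ is deterministic, a straightforward induction on $t$ yields joint independence of $(p_{t,i})_{i\ge \beta+2}$ for every $t\le T$, whence $\mathbb{E}[\prod_{i\ge \beta+2}p_{t,i}]=\prod_{i\ge \beta+2}\mathbb{E}[p_{t,i}]$. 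Plugging in Lemma~\ref{lemma:los-expectation-of-Y} for each factor bounds this by $((1+o(1))/2)^{n-\beta-1}$, and the hypothesis $\gamma^*\ge (1+\delta)/e^{1-\varepsilon}$ guarantees $n-\beta-1=\Omega(n)$, so the product is $e^{-\Omega(n)}$.

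Putting everything together, the overall failure probability is at most $\Pr(\mathcal{G}_T^c)+T\lambda\cdot e^{-\Omega(n)}=e^{-\Omega(\mu)}+e^{k\mu}\cdot \poly{n}\cdot e^{-\Omega(n)}$, which reduces to $e^{-\Omega(\mu)}$ once $k$ is chosen small enough relative to the hidden constant in $\Omega(n)$ (note that the conclusion is vacuous once $\mu$ exceeds a constant multiple of $n$, so we may assume $\mu=\bigO{n}$). The main obstacle is the joint-independence claim for the tail marginals across iterations, since Lemma~\ref{lemma:los-Y-independent} only supplies within-iteration independence; the inductive extension rests crucially on the observation that truncation selection under \los is determined by the first $Z_t^*\le \beta+1$ bits on the event $Z_t\le \beta$, so it never biases bits at positions $\beta+2,\ldots,n$ and each one-step marginal update is consequently independent across these positions.
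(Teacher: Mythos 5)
Your proof is correct and rests on the same three ingredients as the paper's (the good events of Lemmas~\ref{lemma:stay-below-beta}, \ref{lemma:los-Y-independent} and \ref{lemma:los-expectation-of-Y}, a per-offspring bound of $e^{-\Omega(n)}$, and a union bound over the $e^{k\mu}\lambda$ samples), but the final step takes a genuinely different route: the paper views the event as the sum $\sum_{i\ge \beta+2}Y_{t,i}$ attaining its maximum value $n-(\beta+1)$ and invokes the Chernoff--Hoeffding bound, whereas you factorise $\Pr(\text{all tail bits}=1)=\mathbb{E}\left[\prod_{i\ge\beta+2}p_{t,i}\right]=\prod_{i\ge\beta+2}\mathbb{E}[p_{t,i}]\le ((1+o(1))/2)^{\Omega(n)}$ directly. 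For the extreme event ``every bit equals one'' the product identity is the more elementary tool and gives the same exponent, and it makes explicit that what is really needed is \emph{mutual} independence of the tail coordinates --- which Chernoff--Hoeffding requires as well, even though Lemma~\ref{lemma:los-Y-independent} as stated only asserts \emph{pairwise} independence; your inductive decoupling argument therefore supplies something the paper's sketch leaves implicit rather than duplicating it. Two small corrections. First, your parenthetical claims selection is determined by the first $Z_t^*\le \beta+1$ bits; but $Z_t^*$ is not bounded by $\beta+1$ on the event $Z_t\le\beta$ (Lemma~\ref{lemma:distance-z-zstar} only controls $Z_t^*-Z_t$ in expectation). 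What your argument actually uses, and what is true, is that the selected \emph{set} (as opposed to the ranking within it) is determined by the first $Z_t+1\le\beta+1$ bit positions together with tie-breaking, so the model update never reads positions $\ge\beta+2$. Second, conditioning on the global event $\mathcal{G}_T$ does not in general preserve a product structure; the clean way to phrase your induction is to condition on the head trajectory (bits and marginals at positions $\le\beta+1$), with respect to which $\{Z_t\le\beta\ \forall t\le T\}$ is measurable, establish conditional independence and the conditional bound $\mathbb{E}[p_{t,i}\mid \text{head}]=(1\pm o(1))/2$ there, and then integrate --- this is exactly the decoupling you describe, so it is a matter of phrasing, not substance. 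Finally, your explicit handling of the $T\lambda e^{-\Omega(n)}$ term and the resulting restriction $\mu=\bigO{n}$ is more careful than the paper, although ``vacuous'' is not quite the right word: for $\mu=\omega(n\log n)$ the stated conclusion would actually fail (each offspring samples the tail as all ones with probability at least $n^{-n}$), so the restriction is necessary for the lemma itself.
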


The proof makes use of the observation that 
the remaining bits are all 
sampled correctly
if the sum $\sum_{i\ge \beta+2}Y_{t,i}=n-(\beta+1)=\Omega(n)$, 
and by Chernoff-Hoeffding bound \citep{Dubhashi:2009:CMA}. 
We are ready to show our main result of the \umda on the \los function.

\begin{theorem}
\label{thm:umda-exponential-runtime}
The runtime of 
the UMDA  with the parent population size 
$\mu\ge c\log n$ for some sufficiently large constant 
$c>0$ and the offspring population size
$\lambda \le \mu e^{1-\varepsilon}/(1+\delta)$ for any constants
$\delta>0$ and $\varepsilon \in (0,1)$ 
is $e^{\Omega(\mu)}$ on the \los function with probability $1-e^{-\Omega(\mu)}$
and in expectation.
\end{theorem}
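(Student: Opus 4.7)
The plan is to assemble the preceding structural lemmas almost mechanically. First, I would verify that the parameter assumption $\lambda \le \mu e^{1-\varepsilon}/(1+\delta)$ is equivalent to the selective pressure satisfying $\gamma^* = \mu/\lambda \ge (1+\delta)/e^{1-\varepsilon}$, which is exactly the hypothesis feeding into the calculation just before Lemma~\ref{lemma:los-unlikely-sample-remaining-blocks}: it yields $\beta \le n(1-\varepsilon)$, and hence $n-(\beta+1) = \Omega(n)$ bit positions lie strictly beyond the $\beta$-threshold.

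Next, observe that sampling the global optimum $1^n$ in any single iteration requires, in particular, that all of those $\Omega(n)$ positions be simultaneously set to one in some offspring of that iteration. Lemma~\ref{lemma:los-unlikely-sample-remaining-blocks} (which itself chains Lemma~\ref{lemma:stay-below-beta}, Lemma~\ref{lemma:los-Y-independent} and Lemma~\ref{lemma:los-expectation-of-Y}) states that no such event occurs during a window of $e^{\Omega(\mu)}$ iterations, except with failure probability $e^{-\Omega(\mu)}$. Consequently, the number of iterations until $1^n$ is first sampled is at least $e^{\Omega(\mu)}$ with probability $1 - e^{-\Omega(\mu)}$, and since the number of function evaluations equals the iteration count times the polynomially bounded factor $\lambda$, the runtime is also $e^{\Omega(\mu)}$ with the same probability.

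To convert this high-probability statement into the expectation bound, let $T$ denote the runtime. From the above, $\prob{T \le e^{c\mu}} \le e^{-c'\mu}$ for constants $c, c' > 0$. A one-line calculation then gives $\expect{T} \ge e^{c\mu}\cdot\prob{T > e^{c\mu}} \ge e^{c\mu}\left(1 - e^{-c'\mu}\right) = e^{\Omega(\mu)}$, which closes both assertions of the theorem.

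The genuine difficulty of the whole argument lies not in this concluding synthesis but in the structural lemmas leading up to it. The delicate step is controlling the ``free'' marginals at positions beyond $\beta+1$ so that they remain close to $1/2$ over an exponentially long time window (Lemma~\ref{lemma:los-expectation-of-Y}), because conditional on $Z_t$ these marginals evolve as a near-martingale whose variance must be bounded uniformly across $e^{\Omega(\mu)}$ iterations. This is where the assumption $\mu \ge c\log n$ is consumed through a union bound, and once that drift control is in place, the present theorem reduces to the short bookkeeping argument sketched above.
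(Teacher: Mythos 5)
Your proposal is correct and follows essentially the same route as the paper's proof: invoke Lemma~\ref{lemma:los-unlikely-sample-remaining-blocks} (resting on Lemmas~\ref{lemma:stay-below-beta}, \ref{lemma:los-Y-independent} and \ref{lemma:los-expectation-of-Y}) to rule out sampling the $\Omega(n)$ bits beyond position $\beta+1$ as all ones within $e^{\Omega(\mu)}$ iterations, then pass to the expectation via the law of total expectation exactly as in your one-line calculation. The only cosmetic difference is that the paper explicitly dispatches the case where phase~1 itself lasts super-polynomially long, which your argument covers implicitly since the relevant lemmas hold over the whole window $[1,e^{k\mu}]$.
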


\begin{proof}
It suffices to show the high-probability statement as by the law of total expectation
\citep{bib:Motwani1995} the expected runtime 
is $e^{\Omega(\mu)}(1-e^{-\Omega(\mu)})=e^{\Omega(\mu)}$.
Consider phase 1 and phase 2 as mentioned above. We also assume
that phase 1 lasts for a polynomial number of iterations; otherwise, we are done and the theorem trivially holds. 

During phase 2,
we have observed that the random variable $Z_t$ always stays below 
$\beta$ for any $t\le e^{\Omega(\mu)}$ with high probability, while
the $\Omega(n)$ remaining bits cannot be sampled correctly
in any iteration $t\le e^{\Omega(\mu)}$ with probability $1-e^{-\Omega(\mu)}$ by
Lemma~\ref{lemma:los-unlikely-sample-remaining-blocks}.
Thus, the all-ones bitstring will be sampled with probability at most 
$e^{-\Omega(\mu)}$, the runtime
of the \umda on the \los function is 
$e^{\Omega(\mu)}$ with
probability $1-e^{-\Omega(\mu)}$, which completes the proof.
\end{proof}

\section{High selective pressure}
\label{sec:umda-los-high-sel-press}

When the selective pressure 
becomes higher such that
the value of $\alpha=\alpha(n)$ 
exceeds the problem instance size $n$, 
phase 1 would end when the $\mu$ fittest individuals 
are all-ones bitstrings, i.e., the global 
optimum has been found. 
In order for this to be the case, 
by (\ref{eq:alpha-value}) we obtain the inequality
$
\gamma^*/(1-\delta) \le \left(1-1/n\right)^n
$
for any constant $\delta\in (0,1)$. 
The right-hand side is at least $(1-\delta)/e$ for any
$n\ge (1+\delta)/\delta$ \citep{LehreOliveto2017}. If we choose 
the selective pressure $\gamma^*\le (1-\delta)^2/e$, 
the above inequality always holds. 
In this case, Dang and Lehre \citep{Dang:2015}
have already shown that the algorithm requires an 
$\bigO{n\lambda\log \lambda+n^2}$ expected runtime
on the function via the level-based theorem. 
We are now going to show a lower bound of 
$\Omega(\frac{n\lambda}{\log(\lambda-\mu)})$ 
on the expected runtime. 

\begin{lemma}
\label{lemma:distance-z-zstar}
For any $t\in \mathbb{N}$ that $\mathbb{E}[Z_t^*-Z_t]=\bigO{\log \mu}$.
\end{lemma}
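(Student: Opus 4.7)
The plan is to reduce the claim to an order-statistics estimate for the $\lambda$ iid leading-$1$s counts in $P_t$ (conditional on $\mathcal{F}_{t-1}$), and then to bound the gap between the maximum and the $\mu$-th largest by analysing the decay of the survival function $q_{t,\cdot}$ defined by the marginals $p_t$.

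First I would use the definitions $Z_t^*\ge i\iff C_{t,i}\ge 1$ and $Z_t\ge i\iff C_{t,i}\ge \mu$ to write
\begin{align*}
Z_t^*-Z_t \;=\; \sum_{i\ge 1}\mathbf{1}[Z_t<i\le Z_t^*]
\;=\; \sum_{i\ge 1}\mathbf{1}[1\le C_{t,i}\le \mu-1],
\end{align*}
which by linearity gives
$\expect{Z_t^*-Z_t}=\sum_{i\ge 1}\bigl(\prob{C_{t,i}\ge 1}-\prob{C_{t,i}\ge \mu}\bigr)=\expect{Z_t^*}-\expect{Z_t}$. Conditioning on $\mathcal{F}_{t-1}$ so that the vector $p_t$ is fixed, the deferred-decisions analysis developed earlier in the paper shows that the $\lambda$ individuals of $P_t$ have iid leading-$1$s counts with survival function $q_{t,k}:=\prod_{j=1}^{k}p_{t,j}$ and $C_{t,i}\mid\mathcal{F}_{t-1}\sim\bin{\lambda,q_{t,i}}$.

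Next I would couple each individual's leading-$1$s count with an iid $U\sim\unif(0,1)$ by setting it to $\max\{k\ge 0 : q_{t,k}\ge U\}$. Because $q_{t,\cdot}$ is non-increasing, $Z_t^*$ then corresponds to the smallest of the $\lambda$ uniforms, $U_{(1)}$, and $Z_t$ to their $\mu$-th smallest, $U_{(\mu)}$; the well-known identities $\expect{U_{(1)}}=1/(\lambda+1)$ and $\expect{U_{(\mu)}}=\mu/(\lambda+1)$, together with Chernoff concentration of $C_{t,i}$ around $\lambda q_{t,i}$, let me approximate $Z_t^*\approx \max\{i : q_{t,i}\gtrsim 1/\lambda\}$ and $Z_t\approx \max\{i : q_{t,i}\gtrsim \mu/\lambda\}$. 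The conditional gap therefore counts the indices $i$ whose $q_{t,i}$ lies in the transition window $[1/\lambda,\mu/\lambda]$, and since $q_{t,i+1}/q_{t,i}=p_{t,i+1}$, this length equals $\Theta\!\left(\log\mu/\log(1/\bar p)\right)$ for an ``effective'' marginal $\bar p$ in that window.

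The hard part will be controlling $\bar p$ away from $1$ by a constant: if the marginals at positions just past $Z_t$ were allowed to sit close to $1-1/n$, the conditional gap would degrade to $\bigO{n\log\mu}$ in the worst case. I would address this by arguing that at positions beyond the current selection front, the marginals remain close to their initial value $1/2$ in expectation---invoking the (approximate) martingale behaviour $\expect{p_{t,i}}=(1\pm o(1))\cdot(1/2)$ already established in Lemma~\ref{lemma:los-expectation-of-Y} (which itself rests on the martingale property of \umda without borders from \citep{bib:Friedrich2016}), or by a direct argument showing that selection in iteration $t-1$ cannot concentrate bits at positions far ahead of $Z_{t-1}$. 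Taking the outer expectation over $\mathcal{F}_{t-1}$ and combining with the conditional order-statistics bound then gives $\expect{Z_t^*-Z_t}=\bigO{\log\mu}$, completing the proof.
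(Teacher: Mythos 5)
Your reduction to order statistics is a genuinely different route from the paper's. The paper bounds $Z_t^*-Z_t$ by $1+\delta_t'$, where $\delta_t'$ is the number of leading 1s of the fittest of $\mu$ fresh individuals of length $n-Z_t-1$ (obtained by conditioning on the $\mu$ selected individuals all carrying a 1 at position $Z_t+1$), and then evaluates $\expect{\delta_t'}$ via the tail sum $\sum_{k\ge 0}(1-(1-2^{-(k+1)})^{\mu})=\bigO{\log\mu}$, citing Eisenberg's bound on the expected maximum of geometric random variables. Your telescoping identity $Z_t^*-Z_t=\sum_{i\ge 1}\mathbf{1}[1\le C_{t,i}\le\mu-1]$ and the quantile coupling are a clean alternative packaging of the same order-statistics phenomenon, and they correctly predict the $\log\mu$ (rather than $\log\lambda$) scaling of the gap.

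The gap is in how you close the argument. You reduce everything to ``the number of indices $i$ with $q_{t,i}\in[1/\lambda,\mu/\lambda]$ is $\Theta(\log\mu/\log(1/\bar p))$ for an effective marginal $\bar p$'' and then propose to bound $\bar p$ away from $1$ using Lemma~\ref{lemma:los-expectation-of-Y}. That lemma only controls $\expect{p_{t,i}}$, whereas the window length is a function of the \emph{realized} marginals; the paper explicitly warns, immediately after that lemma, that the realized marginals can sit near the borders even when their expectations stay at $1/2$, so $\expect{p_{t,i}}=(1\pm o(1))(1/2)$ does not by itself bound the expected window length, and your own worst case $\bigO{n\log\mu}$ is not excluded by the expectation bound alone. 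What is actually needed is control of the product, e.g.\ a bound of the form $\expect{q_{t,i}}\le (1/2+o(1))^{\,i-Z_t-1}$, which requires the (approximate) independence of the marginals beyond the selection front --- this is exactly the ``independent sampling'' step the paper leans on when it replaces each $p_i$ by $1/2$ inside $\sum_k(1-(1-2^{-(k+1)})^{\mu})$. Your second suggested fix (a direct argument that selection cannot concentrate bits ahead of the front) is the right instinct but is not carried out; as written, the proposal identifies the crux correctly but does not resolve it.
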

\begin{proof}
Let $\delta_{t}:=Z_{t}^*-Z_{t}$. We pessimistically assume that 
the $Z_t$ 
first marginals are all set to one 
since we are only interested in a lower bound and this 
will speed up the optimisation process. We also define $\delta_t'$ to be the 
number of leading 1s of the fittest individual in a population of 
$\mu$ individuals each of length $n-Z_t-1$. By the law of total expectation, we get 
\begin{align*}
    \mathbb{E}[\delta_t\mid Z_t]
     &=(1+\mathbb{E}[\delta_t'\mid Z_t,X_{t,Z_t+1}=\mu])\cdot \Pr(X_{t,Z_t+1}=\mu\mid Z_t)\\
    &\le 1+\mathbb{E}[\delta_t'\mid Z_t,X_{t,Z_t+1}=\mu]
    =1+\mathbb{E}[\delta_t'\mid Z_t].
\end{align*}

We are left to calculate the expectation of $\delta_t'$, conditional on the 
random variable $Z_t$.
Let $f:=\los$. For simplicity, we also denote $(p_{i})_{i=1}^{n'}$ 
as the marginals of the bit positions
from $Z_t+2$ to $n$, respectively, where $n':=n-Z_t-1$. 
The probability of sampling an individual with $k$
leading 1s is $\Pr(f(x)=k)=(1-p_{k+1})\prod_{i=1}^{k}p_{i}$, then 
$\Pr(f(x)\le k)=\sum_{j=0}^k \Pr(f(x)=j)
=\sum_{j=0}^k(1-p_{j+1}) \prod_{i=1}^j p_{i}$. 
Furthermore, the probability that all $\mu$ 
individuals have at most $k$ leading 1s is 
$\Pr(\delta_t'\le k)=\prod_{q=1}^{\mu}\Pr(f(x^{(q)})\le k)$, and 
$\Pr(\delta_t'>k)=1-\Pr(\delta_t'\le k)$. 
Because $\mathbb{E}[Y]\le \sum_{i=0}^\infty \Pr(Y>i)$
for any bounded integer-valued random variable $Y$, we then get 
\begin{align*}
   \mathbb{E}[\delta_t'\mid (p_i)_{i},Z_t]
&\le \sum_{k=0}^{\infty}\left(1-\prod_{q=1}^{\mu}\sum_{j=0}^k (1-p_{j+1})
\prod_{i=1}^j p_i\right). 
\end{align*}
Note that by Lemma~\ref{lemma:los-expectation-of-Y}, 
each marginal $p_i$ has an expectation of 
$(1\pm o(1))(1/2)$. 
By the tower property of expectation, linearity of expectation and
independent sampling, we then obtain 
\begin{align*}
    \mathbb{E}[\delta_t'\mid Z_t]
    &=\mathbb{E}[\mathbb{E}[\delta_t'\mid (p_i)_i,Z_t]]\\
    &\le \sum_{k=0}^{\infty} (1-(1-2^{-(k+1)})^{\mu})+o(1)=\bigO{\log \mu}.
\end{align*}
The final bound follows from \citep{EISENBERG2008135},
which completes the proof.
\end{proof}

Lemma~\ref{lemma:distance-z-zstar} gives the important insight that the
random variables 
$Z_t$ and $Z_t^*$ only differ by a logarithmic additive term at any point in time in
expectation. Clearly, the global optimum is found when the random variable
$Z_t^*$ obtains the value of $n$. We can therefore alternatively
analyse the random variable $Z_t$ instead of $Z_t^*$. 
In other words, the random variable $Z_t$, starting from an initial value 
$Z_0$ given in Lemma~\ref{lemma:fittest-individual},
has to travel an expected distance of $n-\bigO{\log \mu}-Z_0$ bit
positions before
the global optimum is found. We shall make use 
of the additive drift theorem (for a lower bound) \citep{HE200359}
for a distance function $g(x)=n-x$ on the 
stochastic process $(Z_t)_{t\in \mathbb{N}}$. Let 
$\Delta_t:=g(Z_t)-g(Z_{t+1})=(n-Z_t)-(n-Z_{t+1})
=Z_{t+1}-Z_{t}$ be the single-step 
change (also called drift) in the value of the random variable $Z_t$. The following lemma provides an upper bound on the expected drift, 
which directly leads to a lower bound on the 
expected runtime.

\begin{lemma}
\label{lemma:expected-drift-UB}
For any $t\in \mathbb{N}$ and $Z_t\in [n-1]\cup \{0\}$, it holds that
$\mathbb{E}[\Delta_t\mid \mathcal{F}_t]=\bigO{\log (\lambda-\mu)}$.
\end{lemma}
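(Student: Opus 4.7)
The plan is to upper bound the conditional expected drift using the standard tail-sum identity, reduce each tail to a binomial upper-tail probability via the principle of deferred decisions, and apply a Chernoff-type bound to obtain rapid decay beyond a threshold of order $\log(\lambda-\mu)$. Since $\Delta_t=Z_{t+1}-Z_t$ can be negative, I would first use the bound $\expect{\Delta_t\mid \mathcal{F}_t}\le \expect{\Delta_t^+\mid \mathcal{F}_t}=\sum_{k\ge 1}\prob{Z_{t+1}\ge Z_t+k\mid \mathcal{F}_t}$. The event $Z_{t+1}\ge Z_t+k$ is equivalent to $C_{t+1,Z_t+k}\ge \mu$, and reusing the deferred-decisions argument behind~(\ref{eq:los-expectation-of-C})--(\ref{eq:los-expectation-of-D}), $C_{t+1,Z_t+k}\mid \mathcal{F}_t$ is distributed as $\bin{\lambda,q_k}$ with $q_k:=\prod_{i=1}^{Z_t+k}p_{t+1,i}$.

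I would then bound $q_k$ using three facts about the post-selection marginals. First, the first $Z_t$ marginals equal the upper border $1-1/n$ by the definition of $Z_t$, since $X_{t,i}=\mu$ for $i\le Z_t$. Second, $p_{t+1,Z_t+1}=C_{t,Z_t+1}/\mu\le (\mu-1)/\mu$, since $C_{t,Z_t+1}<\mu$. Third, $p_{t+1,i}\le 1-1/n$ for $i\ge Z_t+2$ by the border. Together, $q_k$ decays multiplicatively in $k$. Using the complementary form $\prob{\bin{\lambda,q_k}\ge \mu}=\prob{\bin{\lambda,1-q_k}\le \lambda-\mu}$, the Chernoff lower-tail bound shows that once $\lambda(1-q_k)$ comfortably exceeds $\lambda-\mu$, this probability is at most $\exp(-\Omega(\lambda-\mu))$. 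Splitting the sum at the corresponding threshold $K^\star$ on $k$, the terms with $k\le K^\star$ contribute at most $K^\star$ (each being at most $1$), and the terms with $k>K^\star$ sum to $\bigO{1}$ by geometric decay.

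The main obstacle is to pin down $K^\star=\bigO{\log(\lambda-\mu)}$ rather than the substantially larger $\bigO{n\log(\lambda/\mu)}$ one would get from naively using only the border bound on each marginal. Converting $\lambda(1-q_{K^\star})\gtrsim \lambda-\mu$ into an explicit bound on $K^\star$ requires tracking the per-step multiplicative decay of $q_k$; the border alone gives only a factor of $1-1/n$ per step, which is far too slow. The sharper bound must exploit that the marginals $p_{t+1,i}$ for $i\ge Z_t+2$ have expectation close to $1/2$, as captured by an adaptation of Lemma~\ref{lemma:los-expectation-of-Y}, yielding a decay factor close to $1/2$ per position and thus $K^\star=\bigO{\log(\lambda/\mu)}\subseteq \bigO{\log(\lambda-\mu)}$. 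A further technicality arises when $\mu\le \lambda/2$, where the Chernoff lower-tail bound on $\bin{\lambda,1-q_k}$ degenerates and one must instead use the multiplicative upper-tail form $\prob{\bin{\lambda,q}\ge \mu}\le (e\lambda q/\mu)^\mu$ while tracking its explicit geometric decay.
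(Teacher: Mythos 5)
Your plan is sound and reaches the right bound, but it takes a genuinely different route from the paper. The paper writes the drift as at most $1+\Delta_t'$, where $\Delta_t'$ counts the leading 1s of the $\mu$-th ranked individual beyond position $Z_t+1$, stochastically dominates $\Delta_t'$ by the number of leading 1s of the \emph{fittest} individual among $\lambda-\mu$ fresh samples whose marginals have expectation $(1\pm o(1))/2$ (via Lemma~\ref{lemma:los-expectation-of-Y}), and then cites \citep{EISENBERG2008135} for the $\bigO{\log(\lambda-\mu)}$ expected maximum. You instead expand the tail sum $\sum_{k\ge 1}\prob{C_{t+1,Z_t+k}\ge\mu\mid\mathcal{F}_t}$ and control each term through $\bin{\lambda,q_k}$; since the cited maximum-of-near-geometrics bound is itself proved by exactly such a tail-sum-plus-threshold argument, your version is essentially a self-contained unfolding that bypasses the order-statistic domination step, and you correctly isolate the crux both arguments share: the border bound $1-1/n$ per position is useless, and one must use the $(1\pm o(1))/2$ expectation of the marginals beyond position $Z_t+1$ to get per-position decay of $q_k$ by roughly $1/2$. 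Two cautions. First, in the regime of Theorem~\ref{thm:lower-bound-umda-los} one has $\mu/\lambda\le(1+\delta)^2/e<1/2$, so the ``further technicality'' you defer to the end --- the bound $\prob{\bin{\lambda,q}\ge\mu}\le(e\lambda q/\mu)^\mu$, or more simply Markov's inequality $\prob{\bin{\lambda,q_k}\ge\mu}\le\lambda q_k/\mu$ --- is in fact the main case, while the complementary lower-tail Chernoff you lead with is the degenerate one. Second, conditional on $\mathcal{F}_t$ the vector $p_{t+1}$ and hence $q_k$ are fixed quantities, not quantities with expectation $2^{-k}$, so you cannot literally apply Chernoff with $q_k\approx 2^{-k}$; you must either average over the marginals (Markov makes this painless, since it is linear in $q_k$) or restrict to a high-probability event on $p_{t+1}$. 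The paper's own proof has the same looseness --- it treats the marginals as if they equalled their expectations --- so this does not count against you relative to the published argument, but it is where the real work lies in making either proof fully rigorous.
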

\begin{proof}
Consider bit $i=Z_t+1$. 
The random variable $Z_t$ does not change in value
if $C_{t+1,i}<\mu$.
Thus, the maximum drift is obtained when 
$C_{t+1,i}\ge \mu$. In this case, we can express
$(\Delta_t\mid Z_t) \le 1+(\Delta_t'\mid Z_t)$, where the non-negative
$\Delta_t'\mid Z_t$ denotes the difference  between
the number of leading 1s of the $\mu$-th individual in 
iteration $t+1$ and the value $Z_t+1$. Here, we can take an alternative view that
$\Delta_t'\mid Z_t$ is stochastically
dominated by the number of leading 1s of the 
fittest individual in a population of $\lambda-\mu$ 
individuals each of length 
$n-Z_t-1$, sampled from a product distribution where each marginal 
has an expectation of $(1\pm o(1))/2$ (by Lemma~\ref{lemma:los-expectation-of-Y}).
Following \citep{EISENBERG2008135}, the fittest individual in this population
has $\bigO{\log (\lambda-\mu)}$ leading 1s in expectation.
\end{proof}

We are ready to show a lower bound on the expected 
runtime of the \umda on the \los function.

\begin{theorem}
\label{thm:lower-bound-umda-los}
The expected runtime of the \umda with a parent population size 
$\mu\ge c\log n$ for some sufficiently large constant $c>0$ and an
offspring population size $\lambda\ge \mu e/(1+\delta)^2$ 
where the problem instance size is 
$n\ge (1+\delta)/\delta$ for any constant  $\delta\in (0,1)$ 
is $\Omega(\frac{n\lambda}{\log(\lambda-\mu)})$ on the \los function.
\end{theorem}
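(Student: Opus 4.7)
The plan is to apply the additive drift theorem for lower bounds to the stochastic process $(Z_t)_{t\in\mathbb{N}}$ with distance function $g(x)=n-x$, and then convert the resulting bound on the number of iterations into a bound on function evaluations by multiplying by $\lambda$. The first step is to justify that analysing $Z_t$ (rather than $Z_t^*$) suffices: Lemma~\ref{lemma:distance-z-zstar} shows that $\mathbb{E}[Z_t^*-Z_t]=\bigO{\log \mu}$, so at the moment the global optimum is first sampled (i.e.\ $Z_t^*=n$) the value of $Z_t$ is at least $n-\bigO{\log \mu}$ in expectation. Hence a lower bound on the expected first hitting time of $\{Z_t\ge n-\bigO{\log\mu}\}$ is (up to constants) also a lower bound on the expected optimisation time.

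Next I would pin down the initial and final distances. By Lemma~\ref{lemma:fittest-individual}, $\mathbb{E}[Z_0]=\bigO{\log (\lambda-\mu)}$, so the process must travel an expected distance of at least $n-\bigO{\log \mu}-\bigO{\log(\lambda-\mu)} = \Omega(n)$ (using $\mu,\lambda-\mu \le \lambda = \poly{n}$). The per-step drift is controlled by Lemma~\ref{lemma:expected-drift-UB}, which gives $\mathbb{E}[\Delta_t\mid \mathcal{F}_t]=\bigO{\log(\lambda-\mu)}$ uniformly in $t$. The additive drift theorem for lower bounds \citep{HE200359} then yields that the expected number of iterations until $Z_t\ge n-\bigO{\log \mu}$ is at least
\[
\frac{\Omega(n)}{\bigO{\log(\lambda-\mu)}} = \Omega\!\left(\frac{n}{\log(\lambda-\mu)}\right).
\]
Since each iteration of the \umda performs $\lambda$ function evaluations, multiplying by $\lambda$ produces the claimed expected runtime bound of $\Omega(n\lambda/\log(\lambda-\mu))$.

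The main obstacle is handling the additive drift argument cleanly. The drift theorem for a lower bound requires a bounded drift condition that holds \emph{at every state} of the process up to the target, not merely in expectation over the initial randomness; Lemma~\ref{lemma:expected-drift-UB} is exactly in this conditional form, so this goes through. A subtler point is that Lemmas~\ref{lemma:distance-z-zstar} and~\ref{lemma:expected-drift-UB} rely on Lemma~\ref{lemma:los-expectation-of-Y}, which requires $\mu=\Omega(\log n)$ and produces its $(1\pm o(1))/2$ guarantee only with probability $1-e^{-\Omega(\mu)}$ over polynomially many iterations. One therefore has to argue that the failure event contributes at most a lower-order term to the expected runtime (either by restricting attention to a polynomially bounded time horizon and using the law of total expectation, or by observing that conditional on the rare failure the runtime is still non-negative). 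Finally, the hypothesis $\lambda\ge \mu e/(1+\delta)^2$ together with $n\ge (1+\delta)/\delta$ is the ``high selective pressure'' regime in which the upper-bound analysis of \citep{Dang:2015} guarantees that the algorithm actually reaches the optimum; in this regime $Z_0$, $Z_t^*-Z_t$, and the per-step drift are all $o(n)$, so the bookkeeping above does indeed leave an $\Omega(n)$ distance to be covered at rate $\bigO{\log(\lambda-\mu)}$ per iteration, and the theorem follows.
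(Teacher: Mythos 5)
Your proposal is correct and follows essentially the same route as the paper's own proof: additive drift on $(Z_t)$ with distance $g(x)=n-x$, using Lemma~\ref{lemma:distance-z-zstar} to reduce from $Z_t^*$ to $Z_t$, Lemma~\ref{lemma:fittest-individual} for the initial distance, Lemma~\ref{lemma:expected-drift-UB} for the per-step drift bound, and a final factor of $\lambda$ for evaluations per iteration. Your extra remarks on handling the $e^{-\Omega(\mu)}$ failure probability of Lemma~\ref{lemma:los-expectation-of-Y} are a sound refinement of a point the paper leaves implicit, but they do not change the argument.
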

\begin{proof}
Consider the drift $\Delta_t$ on the value of the random variable $Z_t$. 
By Lemma~\ref{lemma:expected-drift-UB} we get
$\mathbb{E}[\Delta_t\mid \mathcal{F}_t]=\bigO{\log (\lambda -\mu)}$.
Since the random variable 
$Z_t$ has to travel an expected distance of $n-\bigO{\log\lambda}-Z_0$ 
before the 
global optimum is found, the additive drift theorem 
shows that the expected number of iterations until
the global optimum is found is
$
\mathbb{E}[T\mid Z_0]=\bigO{(n-Z_0)/\log(\lambda-\mu)},
$
which by the towel rule and noting that
$\mathbb{E}[Z_0]=\bigO{\log \lambda}$ satisfies
$
\mathbb{E}[T]=\mathbb{E}[\mathbb{E}[T\mid Z_0]]
=\Omega(n/\log(\lambda-\mu))
$.
The proof is complete by noting that
there are $\lambda$ fitness evaluations in each iteration 
of the \umda.
\end{proof}

\section{LeadingOnes with prior noise}
\label{sec:umda-noise-los}

We consider a 
prior noise model and formally define the problem for any constant
$0<p<1$ as follows.
\begin{equation*}
F(x_1,\ldots,x_n)=\begin{cases}
    f(x_1,\ldots,x_n), &\text{w.p. }1-p, \text{ and}\\
    f(\ldots,1-x_i,\ldots), &\text{w.p. }p, \text{ where }i\sim \text{Unif}([n]).
    \end{cases}
\end{equation*}
We denote $F$ as the noisy fitness and $f$ as the actual 
fitness. For simplicity, we also 
denote $P_t$ as the population prior to noise.  
The same noise model is studied in 
\citep{Giessen2016,Droste2002,bib:Dang2015} 
for population-based \eas on the 
\om and \los functions. 

We shall make use of the level-based theorem
\citep[Theorem~1]{bib:Corus2016} and first
partition the search space $\mathcal{X}$ into $n+1$ 
disjoint subsets $A_0,\ldots,A_n$, where 
\begin{equation}
\label{def:levels-los}
A_j=\{x\in \mathcal{X}:\los(x)=j\}.
\end{equation}
We also denote $A_{\ge j}=\{x\in \mathcal{X} \mid \los(x)\ge j\}$.
We then need to verify three conditions (G1), (G2) and (G3) of the level-based theorem
\citep{bib:Corus2016}, where due to the presence of 
noise we choose the parameter 
$\gamma_0=\gamma^*/((1-\varepsilon)(1-p))$ for any 
constant $\varepsilon\in (0,1)$  to leverage the 
impact of noise in our analysis. The following 
lemma tells us the number of individuals
in the noisy population in iteration $t$ which has
fitness 
$F(x)=f(x)\ge j$.

\begin{lemma}
\label{lem:num-of-unaffected-inds}
Assume  that $|P_t\cap A_{\ge j}|\ge \gamma_0\lambda$, where 
$\gamma_0:=\gamma^*/((1-p)(1-\delta))$ for some constant $\delta\in (0,1)$. 
Then, there are at least $\mu$ individuals with the fitness 
$F(x)=f(x)\ge j$ in the noisy population 
with probability $1-e^{-\Omega(\mu)}$. Furthermore, 
there are
at most $\varepsilon\mu$ individuals with actual 
fitness $f(x)\le j-1$ and noisy fitness $F(x)\ge j$ for some small constant
$\varepsilon\in (0,1)$ with probability $1-e^{-\Omega(\mu)}$.
\end{lemma}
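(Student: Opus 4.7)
The plan is to prove the two claims separately, each via a Chernoff bound applied to an appropriate binomial sum. For the first claim, observe that noise acts independently on each sampled individual and leaves its fitness unchanged with probability $1-p$. Let $Y$ count the individuals of $P_t \cap A_{\ge j}$ that are unaffected by noise; conditional on $\mathcal{F}_t$, then $Y \sim \bin{|P_t\cap A_{\ge j}|,\, 1-p}$ with
$$
\mathbb{E}[Y \mid \mathcal{F}_t] \;\ge\; (1-p)\gamma_0\lambda \;=\; \mu/(1-\delta),
$$
by the assumption on $|P_t\cap A_{\ge j}|$ and the choice of $\gamma_0$. Every unaffected individual automatically satisfies $F(x)=f(x)\ge j$, so a standard multiplicative Chernoff bound gives $\prob{Y<\mu} = \prob{Y < (1-\delta)\cdot\mu/(1-\delta)} \le e^{-\Omega(\mu)}$, which yields the first part.

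For the second claim, the key combinatorial observation is that an individual $x$ with $f(x)=k<j$ can be ``falsely promoted'' to $F(x)\ge j$ only if the noise event fires (probability $p$) and the uniformly chosen bit to flip sits at the unique blocking position $k+1$ (probability $1/n$). Indeed, flipping any bit at a position $\le k$ strictly decreases $\los$ (since all those bits are $1$), while flipping any bit at a position $\ge k+2$ leaves bit $k+1$ equal to $0$ so that $\los$ stays at $k$; only the flip at position $k+1$ can even possibly lift the leading-ones count past $k$. Hence, each individual is falsely promoted with probability at most $p/n$, independently across individuals, so the total number $W$ of such promotions is stochastically dominated by $\bin{\lambda,\, p/n}$, giving $\mathbb{E}[W\mid \mathcal{F}_t]\le \lambda p/n$. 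In the intended regime ($p$ constant, $\lambda$ polynomial, $\mu\ge c\log n$), the quantity $\lambda p/n$ is $\littleO{\mu}$, so eventually $\varepsilon\mu \ge 2\,\mathbb{E}[W\mid \mathcal{F}_t]$ and a Chernoff bound in the large-deviation regime delivers $\prob{W>\varepsilon\mu} \le e^{-\Omega(\varepsilon\mu)} = e^{-\Omega(\mu)}$.

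The main obstacle is the combinatorial observation underpinning the second part: pinning down that only a flip of the single bit at position $k+1$ can push an individual with $f(x)=k<j$ past the threshold, which supplies the crucial factor $1/n$ in the per-individual promotion probability. Without this factor one would only get the trivial bound $p$, which is a constant and thus not exponentially small. Once this combinatorial claim is secured, both parts reduce to routine tail bounds on sums of independent Bernoulli indicators, and a union bound (or simply handling the two events separately and paying a constant factor) finishes the proof.
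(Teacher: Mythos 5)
Your proposal follows essentially the same route as the paper: for the first claim, count the individuals of $P_t\cap A_{\ge j}$ left untouched by noise, note that their expected number is $(1-p)\gamma_0\lambda=\mu/(1-\delta)$, and apply a multiplicative Chernoff bound; for the second claim, bound the per-individual ``false promotion'' probability by $p/n$ and apply a Chernoff bound in the large-deviation regime. Your explicit combinatorial justification of the $p/n$ factor (only a flip at the unique blocking position $k+1$ can raise the leading-ones count of an individual with $f(x)=k<j$) is in fact spelled out more carefully than in the paper, which merely asserts that ``a specific bit must be flipped.''

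There is one flawed step: you claim that $\lambda p/n = o(\mu)$ already in the regime ``$p$ constant, $\lambda$ polynomial, $\mu\ge c\log n$.'' That is false in general — take $\lambda=n^3$ and $\mu=c\log n$, for which $\lambda p/n = pn^2 \gg \mu$. The inequality $\lambda p/n\le \varepsilon\mu/2$, which is exactly what the large-deviation Chernoff argument needs, requires an additional relation between $\lambda$ and $\mu$; the paper obtains it from the assumption $\mu/\lambda=\Theta(1)$ (constant selective pressure), which is in force in the theorem where the lemma is applied but is not part of the lemma statement. Your argument goes through verbatim once this hypothesis is invoked in place of your stated one, so the gap is easily repaired, but as written the justification of the key inequality does not hold.
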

\begin{proof}
We take an alternative view on 
the sampling of the population and the application of noise. More specifically, 
we first sample the population, sort it in descending order according to
the true fitness, and then noise occurs at an individual 
with probability $p$. Because
noise does not occur at an individual w.p. $1-p$,
amongst the $\gamma_0\lambda$ individuals in 
levels $A_{\ge j}$,
in expectation there are 
$(1-p)\gamma_0\lambda
= \gamma^*\lambda/(1-\delta)=\mu/(1-\delta)$ 
individuals unaffected by noise. 
Furthermore, by a Chernoff bound
\citep{bib:Motwani1995},
there are at least $(1-\delta)\cdot \mu/(1-\delta)=\mu$  such 
individuals  for some constant 
$0<\delta<1$ with probability at least 
$1-e^{-(\delta^2/3)\cdot \mu/(1-\delta)}=1-e^{-\Omega(\mu)}$, which proves the 
first statement.

For the second statement, we only consider 
individuals with actual fitness $f(x)<j$ 
and noisy fitness $F(x)\ge j$ in the noisy population. If such an individual
    is selected when updating the model, it will introduce a 0-bit to the total number of 0s
    among the $\mu$ fittest individuals for the first $j$ bits. Let 
    $B$ denote the number of such individuals. There are at most $(1-\gamma_0)\lambda$ individuals with actual fitness $f(x)<j$, the probability that its noisy fitness is at least $F(x)\ge j$ is at most $p/n$ because a specific bit must be flipped in the prior noise model. Hence the
    expected number of these individuals is upper bounded by 
    \begin{equation}
    \label{eq:expectation-B}
        \mathbb{E}[B]\le (1-\gamma_0)\lambda p/n<\lambda p/n.
    \end{equation}
    We now show by a Chernoff bound
    that the event $B\ge \varepsilon\mu$ for a small constant 
    $\varepsilon\in (0,1)$ 
    occurs with probability at most $e^{-\Omega(\mu)}$. We shall rely on the fact that 
    $\lambda p/n\le \mu\varepsilon/2$ for sufficiently large $n$, which follows from the assumption
    $\mu/\lambda=\Theta(1)$. 
    We use the parameter $\delta:=\varepsilon\mu/\mathbb{E}[B]-1$,
    which by (\ref{eq:expectation-B}) and the assumption
    $\lambda p/n\le \varepsilon\mu/2$ satisfies 
    $\delta\ge \varepsilon\mu n/(p\lambda)-1\ge 1$.
    We also have the lower bound 
    $$
    \delta\cdot \mathbb{E}[B]=\varepsilon\mu-\mathbb{E}[B]
    \ge \varepsilon\mu-\lambda p/n \ge \varepsilon\mu/2.
    $$
    A Chernoff bound \citep{bib:Motwani1995} now gives the desired result 
    \begin{equation}
    \label{eq:prob-sampling-B-less-mu}
       \Pr(B\ge \varepsilon\mu)=\Pr(B\ge (1+\delta)\mathbb{E[B]})\le e^{-\delta \mathbb{E}[B]/3}=e^{-\varepsilon\mu/6},
    \end{equation}
    which completes the proof.
\end{proof}

We now derive upper bounds on the expected runtime of the \umda 
on \los in the noisy environment.

\begin{theorem}
Consider a prior noise model with parameter $p<1$.
The expected runtime of the \umda with a parent 
population size $\mu\ge c\log n$ for some sufficiently large constant 
$c>0$ where $n\ge 1/(3/4-\varepsilon)$ for some small constant 
$\varepsilon\in (0,3/4)$ 
and an offspring population size $\lambda\ge 4e(1+\delta)\mu$ 
is $\bigO{n\lambda\log \lambda + n^2}$ on the
\los function.
\end{theorem}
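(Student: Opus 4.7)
The plan is to apply the level-based theorem of Corus et al.~\citep{bib:Corus2016} to the canonical level partition (\ref{def:levels-los}) with the noise-adjusted parameter $\gamma_0 := \gamma^*/((1-p)(1-\delta))$ prepared by Lemma~\ref{lem:num-of-unaffected-inds}. The hypothesis $\lambda \geq 4e(1+\delta)\mu$, equivalent to $\gamma^* \leq 1/(4e(1+\delta))$, keeps $\gamma_0$ safely below $1/e$, which provides the slack needed to absorb the contamination fraction $\varepsilon$ that Lemma~\ref{lem:num-of-unaffected-inds} admits and allows the analysis to mirror the noise-free treatment in \citep{Dang:2015}.

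For condition (G1), given $|P_t \cap A_{\geq j}| \geq \gamma_0 \lambda$, Lemma~\ref{lem:num-of-unaffected-inds} supplies with probability $1 - e^{-\Omega(\mu)}$ at least $(1-\varepsilon)\mu$ selected individuals of true fitness $f \geq j$, each contributing a $1$ at every position $i \leq j$. A structural observation — that each of the at most $\varepsilon\mu$ noise-contaminated individuals carries exactly one $0$ in positions $1,\ldots,j$ (at the bit that noise flipped) and that these $0$'s are spread across positions rather than concentrated — lets one conclude that the updated marginal $p_{t+1,i}$ for $i \leq j$ reaches the upper border $1 - 1/n$, where the hypothesis $n \geq 1/(3/4 - \varepsilon)$ enters to guarantee the margin $1 - \varepsilon - 1/n = \Omega(1)$. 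Combined with the worst-case lower-border contribution $p_{t+1,j+1} \geq 1/n$ (since all selected individuals might have $f = j$), this yields $z_j = \Omega((1-1/n)^j/n) = \Omega(1/n)$ uniformly in $j$. For (G2), additionally assume $|P_t \cap A_{\geq j+1}| \geq \gamma\lambda$ with $\gamma \in (0, \gamma_0]$: an analogue of Lemma~\ref{lem:num-of-unaffected-inds} at level $j+1$ shows at least $(1-p)(1-\delta)\gamma\lambda$ selected individuals have $f \geq j+1$, so $p_{t+1,j+1} \geq \gamma/\gamma_0$ and the offspring-success probability is at least $(1-1/n)^j \cdot \gamma/\gamma_0 \geq \gamma/(e\gamma_0) \geq (1+\delta)\gamma$ since $\gamma_0 \leq 1/(e(1+\delta))$ (secured by the factor $4$ in the hypothesis). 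Condition (G3) is immediate from $\mu \geq c\log n$ and $\lambda = \Theta(\mu)$. Invoking Theorem~1 of \citep{bib:Corus2016} with $n+1$ levels, $z^* = \Omega(1/n)$, and $\gamma_0 = \Theta(1)$ yields the claimed $\bigO{n\lambda\log\lambda + n^2}$ expected runtime, where the $n^2$ term arises from $\sum_j 1/z_j = \Theta(n^2)$.

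The hard part is (G1): the naive bound $p_{t+1,i} \geq 1 - \varepsilon$ decays as $(1-\varepsilon)^j$, which is exponentially small for $j = \Omega(n)$, so the noise cannot simply be absorbed as a uniform constant-factor loss. The resolution hinges on the structural fact that noise flips at most one bit per individual, so each contaminated selected individual contributes only a single $0$ among positions $1, \ldots, j$ at a location determined by where noise hit; averaging over the noise, a careful counting shows that at each specific position $i \leq j$ the expected number of $0$'s in the selected pool is $\bigO{\varepsilon\mu/j + \lambda p/n}$ rather than $\varepsilon\mu$, enabling the desired marginal bound $p_{t+1,i} \geq 1 - 1/n$ with high probability after a Chernoff argument.
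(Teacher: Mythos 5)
Your overall strategy is the paper's: the same level partition, the same noise-adjusted parameter $\gamma_0=\gamma^*/((1-p)(1-\delta))$, and the same use of Lemma~\ref{lem:num-of-unaffected-inds} to split the selected population into at least $\mu-B$ clean individuals and $B\le\varepsilon\mu$ contaminated ones. The gap is in your verification of (G1), which is the crux of the whole proof. You claim that, because each contaminated individual carries only one $0$ among positions $1,\ldots,j$ and these $0$s are ``spread out'', each marginal $p_{t+1,i}$ for $i\le j$ reaches the upper border $1-1/n$ with high probability. This is false. Writing $X_i$ for the number of selected individuals with a $0$ at position $i\le j$, a single contaminated individual in the selected set already forces $p_{t+1,i}=1-X_i/\mu\le 1-1/\mu$ at some position $i$, which is far below $1-1/n$ when $\mu=\Theta(\log n)$. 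Since reaching the border requires $X_i\le\mu/n$, i.e.\ $X_i=0$, your Chernoff argument on a quantity with expectation $\bigO{\varepsilon\mu/j+\lambda p/n}$ cannot deliver $X_i=0$ simultaneously for all $j$ positions; the union bound over positions costs a factor $j$ and the resulting failure probability is not $o(1)$ over the $\Omega(n)$ iterations the run lasts. So the chain $z_j=\Omega((1-1/n)^j/n)$ is not established by your argument.

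What actually closes (G1) is a \emph{joint} bound on the product that exploits the global constraint $\sum_{i=1}^{j}X_i=B\le\varepsilon\mu$ rather than per-position cleanliness. The paper does this by majorisation: the vector $q$ with $q_i=1-1/n$ for $i<j$ and $q_j=\sum_k p_{t,k}-(1-1/n)(j-1)\ge 1-1/n-B/\mu$ majorises $(p_{t,1},\ldots,p_{t,j})$, whence $\prod_{i\le j}p_{t,i}\ge(1-1/n)^{j-1}(1-1/n-\varepsilon)\ge 1/(4e)$; this is exactly where the hypothesis $n\ge 1/(3/4-\varepsilon)$ is used. (An equivalent elementary route is $\prod_i(1-X_i/\mu)\ge 1-\sum_i X_i/\mu\ge 1-\varepsilon$.) Your intuition in the final paragraph --- that the naive $(1-\varepsilon)^j$ bound is too weak and that the single-$0$-per-individual structure must be exploited --- is correct, but the exploitation has to happen at the level of the product, not marginal by marginal. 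Two smaller issues: your (G2) chain needs $\gamma_0\le 1/(e(1+\delta))$, which fails for $p$ close to $1$ since $\gamma_0$ carries a $1/(1-p)$ factor (the paper instead bounds $p_{t,j+1}\ge\gamma\lambda/\mu$ and reuses the $1/(4e)$ constant); and the level-based bound you invoke is conditional on the high-probability events of Lemma~\ref{lem:num-of-unaffected-inds} holding in every iteration, so a de-conditioning/restart argument (the paper cites Lemma~5 of \citep{Lehre2018PBIL}) is still needed to obtain the unconditional expectation.
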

\begin{proof}
We will make use of the level-based analysis, 
in which we need to verify the three conditions in
the level-based theorem. 
Each level $A_j$ for $j\in [n]\cup\{0\}$ is
formally defined as in (\ref{def:levels-los}), 
and there are a total of $m:=n+1$ levels.

Condition (G1) assumes that 
$|P_t\cap A_{\ge j}|\ge \gamma_0\lambda$, and we are required to show that the 
probability of sampling an offspring in levels $A_{\ge j+1}$ 
in iteration $t+1$ is lower bounded by a value $z_j$. 
We choose the parameter
    $\gamma_0=\gamma^*/((1-\delta)(1-p))$ for any constant $\delta\in (0,1)$ and 
    the selective pressure $\gamma^*=\mu/\lambda$ (assumed to be constant).
    For convenience, we also partition the noisy population into four groups:
    \begin{enumerate}
        \item Individuals with the fitness $f(x)\ge j$ and $F(x)\ge j$.
        \item Individuals with the fitness $f(x)\ge j$ and $F(x)< j$.
        \item Individuals with the fitness $f(x)< j$ and  $F(x)\ge j$.
        \item Individuals with the fitness $f(x)< j$ and $F(x)< j$.
    \end{enumerate}
    
    By Lemma~\ref{lem:num-of-unaffected-inds}, there are at least $\mu$ individuals 
    in group 1 with probability $1-e^{-\Omega(\mu)}$. 
    The algorithm selects the $\mu$ fittest individuals according to the noisy fitness values to update the probabilistic model. Hence, unless the mentioned event does
    not happen, no individuals
    from group 2 or group 4 will be included when updating the model.
    
    We are now going to analyse how individuals from group 3 impact the marginal probabilities. Let $B$ denote the number of individuals in group 3. 
      We pessimistically assume that the algorithm uses all of the $B$ individuals in group 3 and $\mu-B$
    individuals chosen from group 1 when updating the model. 
    For all $i\in [j]$, let  $X_i$
    be the number of individuals in group 3 which has a 1-bits 
    in positions $1$ through $j$, except for one position $i$ where it has a 0-bit. 
    By definition, we then have 
    $
    \sum_{i=1}^j X_i =B.
    $
     The marginal probabilities after updating the model are
    \begin{equation}
    \label{eq:p-definition}
    p_{t,i}=\begin{cases}
    1-X_i/\mu, & \text{if }X_i>0,\\
    1-X_i/\mu-1/n, &\text{if }X_i=0.
    \end{cases}
    \end{equation}
    Following \citep{Boland-1983,Lehre2018PBIL,bib:Wu2017}, we lower bound the 
    probability of sampling an offspring $x$ 
    with actual fitness $f(x)\ge j$, by
    \begin{equation}
    \label{eq:lower-bound-prods}
    \prod_{i=1}^j p_{t,i}\ge \prod_{i=1}^j q_i,
    \end{equation}
    which holds for any vector $q:=(q_1,\ldots,q_j)$ which majorises the vector 
    $p:=(p_{t,1},\ldots,p_{t,j})$. Recall 
    (see \citep{gleser1975, Boland-1983}) that the vector 
    $q$ majorises the vector $p$ if for all $k\in [j-1]$
    \begin{displaymath}
    \sum_{i=1}^k q_i\ge \sum_{i=1}^k p_{t,i}, \text{ and }
    \sum_{i=1}^j q_i = \sum_{i=1}^j p_{t,i}.
    \end{displaymath}
    We construct such a vector $q$ which by the definition majorises the vector 
    $p$ as follows.
    \begin{displaymath}
    q_i=\begin{cases}
    1-1/n, & \text{if }i<j,\\
    \sum_{k=1}^j p_{t,k}-(1-1/n)(j-1), &\text{if }i=j.
    \end{cases}
    \end{displaymath}
  We now show that with high probability, the vector element 
  $q_j$ stays within the interval $[1-1/n-\varepsilon,1-1/n]$, i.e., $q_j$
  is indeed a probability. Since 
  $p_{t,i}\le 1-1/n$ for all 
  $i\le j$, we have the upper bound 
  $
  q_j\le (1-1/n)j-(1-1/n)(j-1)=1-1/n
  $.
  For the lower bound, 
  we note from (\ref{eq:p-definition}) 
  that $p_{t,i}\ge 1-X_i/\mu-1/n$ for all $i\le j$ and
  any $X_i\ge 0$, so we also obtain 
  \begin{align*}
       q_j &\ge \sum_{k=1}^j (1-X_i/\mu-1/n)-(1-1/n)(j-1) \\
       & =1-1/n-\sum_{k=1}^j X_i/\mu=1-1/n-B/\mu.
  \end{align*}
  By Lemma~\ref{lem:num-of-unaffected-inds},
  we have $B\le \varepsilon\mu$ for some small
  constant $\varepsilon\in (0,1)$ 
  with probability $1-e^{-\Omega(\mu)}$. Assume that this high-probability event
  actually happens, we therefore have 
  $q_j\ge 1-1/n-\varepsilon$. From this result, 
  the definition of the vector $q$ and (\ref{eq:lower-bound-prods}), we can conclude 
 that the probability of sampling in iteration
  $t+1$ an offspring $x$ with actual fitness $f(x)\ge j$ is
  \begin{displaymath}
  \prod_{i=1}^j p_{t,i}
  \ge \prod_{i=1}^j q_i
  \ge \left(1-\frac{1}{n}\right)^{j-1}\left(1-\frac{1}{n}-\varepsilon\right)
  \ge \frac{1}{4e}=\Omega(1)
  \end{displaymath}
  since $(1-1/n)^{j-1}\ge 1/e$ for any $n>0$, and by choosing
  $n\ge 1/(3/4-\varepsilon)$ for some positive constant $\varepsilon <3/4$.
  Because we also have $p_{t,j+1}\ge 1/n$, the probability of sampling an 
  offspring in levels $A_{\ge j+1}$ is at least $\Omega(1)\cdot (1/n)=\Omega(1/n)$.
  Thus, the condition (G1) holds with a value of $z_j=\Omega(1/n)$.
  
  For the condition (G2), we assume further that 
  $|P_t\cap A_{\ge j+1}|\ge \gamma\lambda$ for some value $\gamma\in (0,\gamma_0)$,
  and we are also required to show that the probability of sampling an offspring in
  levels $A_{\ge j+1}$ is at least $(1+\delta)\gamma$ for some small constant 
  $\delta\in (0,1)$. Because the marginal $p_{t,j+1}$ can be lower bounded 
  by $\gamma\lambda/\mu$,  
  the above probability can be written as follows.
  \begin{align*}
      \prod_{i=1}^{j+1}p_{t,i}
      \ge p_{t,j+1}\cdot \prod_{i=1}^j p_{t,i}
      &\ge \frac{\gamma\lambda}{\mu} 
      \cdot \frac{1}{4e}\ge (1+\delta)\gamma,
    \end{align*}
    where by choosing
    $\lambda/\mu\ge 4e(1+\delta)$ for some constant $\delta\in (0,1)$.
Thus, the condition (G2) of the level-based theorem is verified.
  
  The condition (G3) requires the offspring population size to satisfy
  \begin{align*}
  \lambda\ge \frac{4}{\gamma_0\delta^2}
    \ln\left(\frac{128m}{\delta^2\cdot \min_j\{z_j\}}\right)
  =\Omega\left(\frac{1-p}{\gamma^*}\log n\right).
  \end{align*}
  Having fully verified the three conditions (G1), (G2) and (G3), and noting
  that $\ln(\delta\lambda/(4+\delta z_j))<\ln(3\delta\lambda/2)$, the level-based 
  theorem   now guarantees an upper bound of $\bigO{n\lambda\log \lambda+n^2}$ on the
  expected runtime of the \umda on the noisy \los function.
  
  We note that our proof is not complete since
     throughout the proof we always assume the happening 
     of the following two events in each
     iteration of the \umda (see Lemma~\ref{lem:num-of-unaffected-inds}): 
     \begin{enumerate}
    \item[(A)] The number of individuals in group 1
    is at least $\mu$ with probability 
     $1-e^{-\Omega(\mu)}$.
     \item[(B)] The number of individuals in group 3 is $B\le \varepsilon\mu$ for some small
     constant $\varepsilon\in (0,3/4)$ 
     with probability $1-e^{-\Omega(\mu)}$.
     \end{enumerate}
     We call an iteration a \textit{success}
     if the two events happen simultaneously; otherwise, 
     we speak of a \textit{failure}.
     By the union bound, an iteration is a failure with
     probability at most $2e^{-\Omega(\mu)}$, 
     and a failure occurs at least once in a polynomial
     number of iterations 
     with probability at most $\text{poly}(n)\cdot 2e^{-\Omega(\mu)}$. If we
     choose the parent population size $\mu\ge c\log n$ for some sufficiently large
     constant $c>0$, then the above probability becomes 
     $\text{poly}(n)\cdot 2e^{-c\log n}\le n^{c'}$ for 
     some other constant $c'>0$. Actually, the upper bound 
     $\bigO{n\lambda\log \lambda+n^2}$ given by the 
    level-based theorem is conditioned
     on the event that there is no failure 
     in any iteration.
    We can obtain the (unconditionally) expected runtime by splitting the time into
    consecutive phases of length $t^*=\bigO{n\lambda\log \lambda+n^2}$, and
    by \citep[Lemma~5]{Lehre2018PBIL}
    the overall expected runtime is at most 
    $4(1+o(1))t^*=\bigO{n\lambda\log \lambda+n^2}$.
\end{proof}

As a final remark, we note that the exponential lower bound 
in Theorem~\ref{thm:umda-exponential-runtime} for the \los function
without noise should also hold
for the noisy \los function.

\section{Experiments}
\label{sec:experiments}

In this section, we provide an empirical study in order to see how
closely the theoretical results match the experimental results for reasonable
problem sizes, and to investigate a wider range of parameters.
Our analysis is focused on different regimes on the selective pressure 
in the noise-free setting.

\subsection{Low selective pressure}

We have shown in Theorem~\ref{thm:umda-exponential-runtime}
that when the selective pressure 
$\gamma_0\ge (1+\delta)/e^{1-\varepsilon}$ for any constants 
$\delta>0$ and $\varepsilon\in (0,1)$,
the \umda  requires
$2^{\Omega(\mu)}$ function evaluations to optimise the \los
function with high probability. 
We now choose 
$\delta=0.2$ and $\varepsilon=0.1$,  we then get 
$\gamma_0\ge (1+0.2)/e^{1-0.1}\approx 0.4879$. 
Thus, the choice
$\gamma_0=0.5$ should be sufficient to yield an exponential runtime.
For the population size, we experiment with three
different settings: $\mu=5\log n$ (small), 
$\mu=\sqrt{n}$ (medium) and $\mu=n$ (large) for a problem instance size
$n=100$. 
Substituting everything into (\ref{eq:alpha-value}) and 
(\ref{eq:beta-value}), we then get
$\alpha\approx47$ and $\beta\approx 87$. The numbers of leading 1s of the fittest
individual and the $\mu$-th individual in the sorted population (denoted
by random variables $Z_t^*$ and $Z_t$ respectively) are shown in Fig.~\ref{fig:umda-los-low} 
 over an epoch of 5000 iterations. The dotted blue lines denote
 the constant functions of $\alpha=47$ and 
 $\beta=87$. One can see that the random variable 
$Z_t$ keeps increasing until it reaches the value 
of $\alpha$ during the early stage and always stays well under value $\beta$ afterwards. 
Furthermore, $Z_t^*$ does not deviate too far from 
$Z_t$ that matches our analysis since the chance of sampling all ones
from the $n-\beta$ remaining bits is exponentially small.

\subsection{High selective pressure}

When the selective pressure is sufficiently high, 
that is, $\gamma_0\le (1-o(1))(1-\delta)/e$ for any constant 
$\delta\in (0,1)$, there is an upper bound 
$\bigO{n^2+n\lambda\log\lambda}$ on the expected runtime  \citep{Dang:2015}.  Theorem~\ref{thm:lower-bound-umda-los} yields a lower bound of 
$\Omega(\frac{n\lambda}{\log(\lambda-\mu)})$.
We start by looking at how the values of 
random variable $Z_t$ and $Z_t^*$ change over time.
Our analysis shows that it never decreases during
the whole optimisation course with overwhelming 
probability and eventually reaches the value of $n$. 
Similarly, we consider the three different settings for population size and also
note that our result holds for a parent population size 
$\mu\ge c\log n$, when the constant $c>0$ 
must be tuned carefully; in this experiment, we set $c=5$ (an integer larger than 3 
should be sufficient). We then get 
$\gamma^* \le (1-1/100)(1-0.1)/e \approx 0.1821$. Therefore,
the choice of $\gamma_0=0.1$ should be sufficient and we then get 
$\alpha\approx 160\gg n=100$. The experiment outcomes are shown
in Fig.~\ref{fig:umda-los-high-z}. 
The empirical behaviours of the two random 
variables match our theoretical analyses.

Furthermore, we are also interested in the average runtime of the algorithm. 
We run some experiments using the same settings for the population size 
where $n\in \{100,200,\ldots,1000\}$. 
For each value of $n$, the algorithms are run 
100 times, and the average runtime is computed. 
The empirical results are shown in 
Fig.~\ref{fig:umda-los-high-regression}. 
We then perform non-linear regression to
fit the power model $y\sim a\cdot n^b$ 
to the empirical data. The fittest model and its corresponding
coefficients $a$ and $b$ are also plotted. 
As seen in Fig.~\ref{fig:umda-los-high-regression}, 
the fittest models are all in the order of $n\lambda/\log(\lambda-\mu)$,
which matches the expected runtime
given in our theoretical analysis. 

\begin{figure}[t]
    \setlength{\abovecaptionskip}{0.00cm}
    \setlength{\belowcaptionskip}{0cm} 
    \centering
    \includegraphics[width=\linewidth, height=.9in]{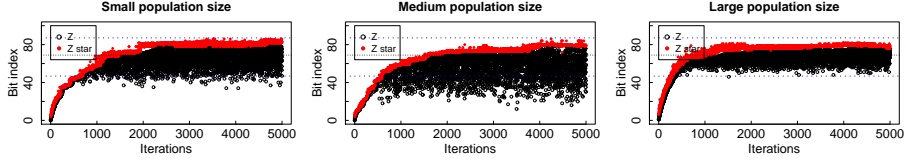}
    \caption{Low selective pressure 
    over long-range time.}
    \label{fig:umda-los-low}
\end{figure}
\begin{figure}[t]
    \setlength{\abovecaptionskip}{0.00cm}
    \setlength{\belowcaptionskip}{0cm} 
    \centering
    \includegraphics[width=\linewidth,height=.9in]{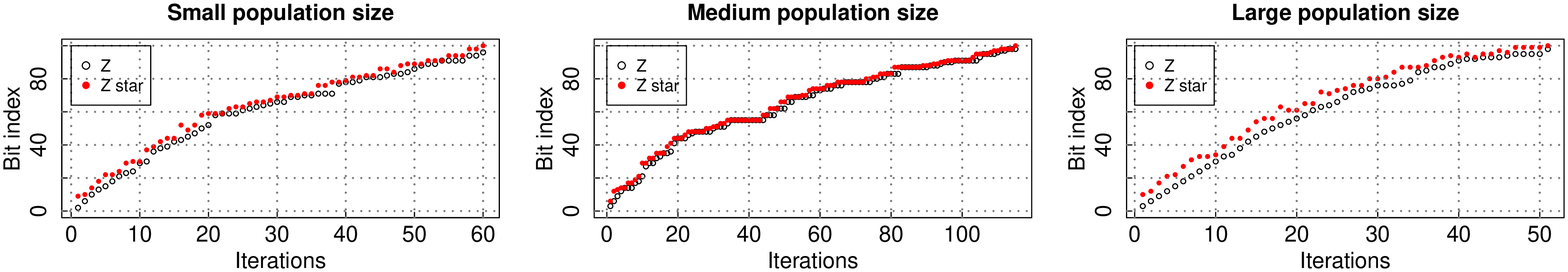}
    \caption{High selective pressure.}
    \label{fig:umda-los-high-z}
\end{figure}
\begin{figure}[t]
    \setlength{\abovecaptionskip}{0.00cm}
    \setlength{\belowcaptionskip}{0cm} 
    \centering
    \includegraphics[width=\linewidth,height=.9in]{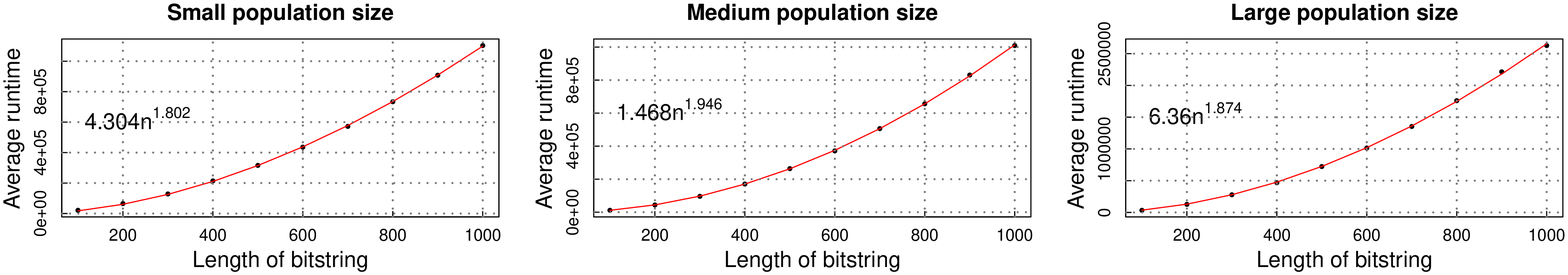}
    \caption{Average runtime under high selective pressure. 
   }
    \label{fig:umda-los-high-regression}
\end{figure}

\section{Conclusion and Future Work}
\label{sec:conclusion}

In this paper, we perform  rigorous analyses for the \umda (with margins)
on the \los function
in case of low selective pressure. We show that the algorithm
requires a $2^{\Omega(\mu)}$ runtime
with probability $1-2^{-\Omega(\mu)}$
and in expectation when $\mu\ge c\log n$ for a sufficiently 
large constant $c>0$ and 
$\mu/\lambda \ge (1+\delta)/e^{1-\varepsilon}$ for any constant 
$\delta>0$ and $\varepsilon\in (0,1)$.
The analyses reveal the limitations of the probabilistic model
based on probability vectors as the algorithm hardly stays at 
promising states for a long time. This leads the algorithm into a
non-optimal equilibrium state from which it is exponentially unlikely
to sample the optimal all-ones bitstring.
We also obtain the lower bound $\Omega(\frac{n\lambda}{\log(\lambda-\mu)})$
on the expected runtime when the selective pressure is sufficiently high. 
Furthermore, we study \umda in noisy optimisation setting for the
first time, where
noise is introduced to the \los function, causing 
a uniformly chosen bit is flipped with probability $p<1$. 
We show that an $\bigO{n^2}$ expected runtime still holds in this case for the
optimal offspring population size $\lambda=\bigO{n/\log n}$. 
Despite the simplicity of the noise model, this can be viewed as the first step towards broadening our understanding of the \umda in a noisy environment.

For future work, the \umda with an optimal 
offspring population size $\lambda=\bigO{n/\log n}$
needs $\bigO{n^2}$ expected time on the \los function \citep{Dang:2015}. In this case,
Theorem~\ref{thm:lower-bound-umda-los} yields a lower bound $\Omega(n^2/\log^2 n)$.
Thus, it remains open whether this gap of $\Theta(\log^2 n)$ 
could be closed in order to achieve a tight bound on the runtime. 
Another avenue for future work would be to investigate
the \umda under a posterior noise model.


\section*{References}
\bibliography{references}

\begin{thebibliography}{10}
\expandafter\ifx\csname url\endcsname\relax
  \def\url#1{\texttt{#1}}\fi
\expandafter\ifx\csname urlprefix\endcsname\relax\def\urlprefix{URL }\fi
\expandafter\ifx\csname href\endcsname\relax
  \def\href#1#2{#2} \def\path#1{#1}\fi

\bibitem{Muhlenbein1996}
H.~M{\"u}hlenbein, G.~Paa{\ss}, From recombination of genes to the estimation
  of distributions I. Binary parameters, 1996, pp. 178--187.

\bibitem{Pelikan2002}
M.~Pelikan, D.~E. Goldberg, F.~G. Lobo, A survey of optimization by building
  and using probabilistic models, Computational Optimization and Applications
  21~(1) (2002) 5--20.

\bibitem{larranaga2001estimation}
P.~Larra{\~n}aga, J.~A. Lozano, Estimation of Distribution Algorithms: A New
  Tool for Evolutionary Computation, Genetic Algorithms and Evolutionary
  Computation, Springer US, 2001.

\bibitem{Eiben:2003:IEC:954563}
A.~E. Eiben, J.~E. Smith, Introduction to Evolutionary Computing,
  SpringerVerlag, 2003.

\bibitem{bib:Hauschild2011}
M.~Hauschild, M.~Pelikan, An introduction and survey of estimation of
  distribution algorithms, Swarm and Evolutionary Computation 1~(3) (2011)
  111--128.

\bibitem{bib:Harik1997}
G.~R. Harik, F.~G. Lobo, D.~E. Goldberg, The compact genetic algorithm, IlliGAL
  report No. 97006, University of Illinois at Urbana-Champaign.

\bibitem{bib:Baluja1994}
S.~Baluja, Population-based incremental learning: A method for integrating
  genetic search based function optimization and competitive learning,
  Technical report, Carnegie Mellon University.

\bibitem{STUTZLE2000889}
T.~St{\"{u}}tzle, H.~H. Hoos, {M}ax-{M}in ant system, Future Generation
  Computer Systems 16~(8) (2000) 889 -- 914.

\bibitem{bib:Friedrich2016}
T.~Friedrich, T.~K\"otzing, M.~S. Krejca, {EDA}s cannot be balanced and stable,
  in: Proceedings of the Genetic and Evolutionary Computation Conference, GECCO
  '16, 2016, pp. 1139--1146.

\bibitem{larranaga2012review}
P.~Larra{\~n}aga, H.~Karshenas, C.~Bielza, R.~Santana, A review on
  probabilistic graphical models in evolutionary computation, Journal of
  Heuristics 18~(5) (2012) 795--819.

\bibitem{Doerr2016}
B.~Doerr, C.~Doerr, The impact of random initialization on the runtime of
  randomized search heuristics, Algorithmica 75~(3) (2016) 529--553.

\bibitem{Dang:2015}
D.~C. Dang, P.~K. Lehre, Simplified runtime analysis of estimation of
  distribution algorithms, in: Proceedings of the Genetic and Evolutionary
  Computation Conference, GECCO '15, 2015, pp. 513--518.

\bibitem{bib:Krejca}
M.~S. Krejca, C.~Witt, Lower bounds on the run time of the univariate marginal
  distribution algorithm on onemax, in: Proceedings of the Foundations of
  Genetic Algorithms Conference, FOGA '17, 2017, pp. 65--79.

\bibitem{bib:Sudholt2016}
D.~Sudholt, C.~Witt, Update strength in edas and aco: How to avoid genetic
  drift, in: Proceedings of the Genetic and Evolutionary Computation Conference
  2016, GECCO '16, 2016, pp. 61--68.

\bibitem{bib:Wu2017}
Z.~Wu, M.~Kolonko, R.~H. M{\"{o}}hring, Stochastic runtime analysis of a cross
  entropy algorithm, IEEE Transactions on Evolutionary Computation 21~(4)
  (2017) 616--628.

\bibitem{bib:Witt2017}
C.~Witt, Upper bounds on the runtime of the univariate marginal distribution
  algorithm on onemax, in: Proceedings of the Genetic and Evolutionary
  Computation Conference, GECCO '17, 2017, pp. 1415--1422.

\bibitem{CGA-GAUSSIAN-NOISE}
T.~Friedrich, T.~K{\"{o}}tzing, M.~S. Krejca, A.~M. Sutton, The compact genetic
  algorithm is efficient under extreme gaussian noise, IEEE Transactions on
  Evolutionary Computation 21~(3) (2017) 477--490.

\bibitem{bib:Lehre2017}
P.~K. Lehre, P.~T.~H. Nguyen, Improved runtime bounds for the univariate
  marginal distribution algorithm via anti-concentration, in: Proceedings of
  the Genetic and Evolutionary Computation Conference, GECCO '17, 2017, pp.
  1383--1390.

\bibitem{bib:lehre2018a}
D.~C. Dang, P.~K. Lehre, P.~T.~H. Nguyen, Level-based analysis of the
  univariate marginal distribution algorithm, Algorithmica.

\bibitem{Lehre2018PBIL}
P.~K. Lehre, P.~T.~H. Nguyen, Level-based analysis of the population-based
  incremental learning algorithm, in: Proceedings of the International
  Conference on Parallel Problem Solving from Nature, PPSN XV, 2018, pp.
  105--116.

\bibitem{Witt2018Domino}
C.~Witt, Domino convergence: why one should hill-climb on linear functions, in:
  Proceedings of the Genetic and Evolutionary Computation Conference, {GECCO}
  '18, 2018, pp. 1539--1546.

\bibitem{bib:Lengler2018}
J.~Lengler, D.~Sudholt, C.~Witt, Medium step sizes are harmful for the compact
  genetic algorithm, in: Proceedings of the Genetic and Evolutionary
  Computation Conference, {GECCO} '18, 2018, pp. 1499--1506.

\bibitem{DoerrSigEDA2018}
B.~Doerr, M.~S. Krejca, Significance-based estimation-of-distribution
  algorithms, in: Proceedings of the Genetic and Evolutionary Computation
  Conference, {GECCO} '18, 2018, pp. 1483--1490.

\bibitem{Hasenohrl:2018}
V.~Hasen\"{o}hrl, A.~M. Sutton, On the runtime dynamics of the compact genetic
  algorithm on jump functions, in: Proceedings of the Genetic and Evolutionary
  Computation Conference, GECCO '18, 2018, pp. 967--974.

\bibitem{bib:Droste2006}
S.~Droste, A rigorous analysis of the compact genetic algorithm for linear
  functions, Natural Computing 5~(3) (2006) 257--283.

\bibitem{bib:Corus2016}
D.~Corus, D.~C. Dang, A.~V. Eremeev, P.~K. Lehre, Level-based analysis of
  genetic algorithms and other search processes, IEEE Transactions on
  Evolutionary Computation 22~(5) (2018) 707--719.

\bibitem{Armaaanzas2008}
R.~Arma{\~{n}}anzas, I.~Inza, R.~Santana, Y.~Saeys, J.~L. Flores, J.~A. Lozano,
  Y.~V.~D. Peer, R.~Blanco, V.~Robles, C.~Bielza, P.~Larra{\~{n}}aga, A review
  of estimation of distribution algorithms in bioinformatics, BioData Mining
  1~(1) (2008) 6.

\bibitem{bib:Slatkin2008}
M.~Slatkin, Linkage disequilibrium {\textemdash} understanding the evolutionary
  past and mapping the medical future, Nature Reviews Genetics 9~(6) (2008)
  477--485.

\bibitem{bib:muhlenbein2002}
H.~M\"uhlenbein, T.~Mahnig, Evolutionary computation and wright's equation,
  Theoretical Computer Science 287 (2002) 145--165.

\bibitem{Krejca2018survey}
M.~S. Krejca, C.~Witt, Theory of estimation-of-distribution algorithms, CoRR
  abs/1806.05392.

\bibitem{bib:Motwani1995}
R.~Motwani, P.~Raghavan, Randomised algorithms, Cambridge University Press,
  1995.

\bibitem{bib:Feller1968}
W.~Feller, An introduction to probability theory and its applications, 3rd
  Edition, Vol.~1, John Wiley \& Sons, Inc., 1968.

\bibitem{EISENBERG2008135}
B.~Eisenberg, On the expectation of the maximum of iid geometric random
  variables, Statistics \& Probability Letters 78~(2) (2008) 135 -- 143.

\bibitem{Dubhashi:2009:CMA}
D.~Dubhashi, A.~Panconesi, Concentration of Measure for the Analysis of
  Randomized Algorithms, 1st Edition, Cambridge University Press, 2009.

\bibitem{LehreOliveto2017}
P.~K. Lehre, P.~S. Oliveto, Theoretical Analysis of Stochastic Search
  Algorithms, Springer International Publishing, 2018, pp. 1--36.

\bibitem{HE200359}
J.~He, X.~Yao, Towards an analytic framework for analysing the computation time
  of evolutionary algorithms, Artificial Intelligence 145~(1) (2003) 59 -- 97.

\bibitem{Giessen2016}
C.~Gie{\ss}en, T.~K{\"o}tzing, Robustness of populations in stochastic
  environments, Algorithmica 75~(3) (2016) 462--489.

\bibitem{Droste2002}
S.~Droste, T.~Jansen, I.~Wegener, On the analysis of the (1+ 1) evolutionary
  algorithm, Theoretical Computer Science 276~(1-2) (2002) 51--81.

\bibitem{bib:Dang2015}
D.~Dang, P.~K. Lehre, Efficient optimisation of noisy fitness functions with
  population-based evolutionary algorithms, in: Foundations of Genetic
  Algorithms {XIII}, {FOGA}'15, 2015.

\bibitem{Boland-1983}
P.~J. Boland, F.~Proschan, The reliability of k out of n systems, The Annals of
  Probability 11~(3) (1983) 760--764.

\bibitem{gleser1975}
L.~J. Gleser, On the distribution of the number of successes in independent
  trials, The Annals of Probability 3~(1) (1975) 182--188.

\end{thebibliography}

\end{document}